\pdfoutput=1 

\documentclass{article}
\usepackage[dvipsnames]{xcolor}
\PassOptionsToPackage{numbers}{natbib}

\usepackage[final]{neurips_2023}

\usepackage[utf8]{inputenc} 
\usepackage[T1]{fontenc}    
\usepackage{hyperref}       
\usepackage{url}            
\usepackage{booktabs}       
\usepackage{amsfonts}       
\usepackage{nicefrac}       
\usepackage{xcolor}         
\usepackage{microtype}      

\newif\ifproofinmain

\usepackage[mathscr]{euscript}
\usepackage{amsmath, amsthm, amssymb, amsfonts, mathtools, bm}
\usepackage{tabularx}
\usepackage{booktabs}
\usepackage{upgreek} 
\usepackage{letltxmacro} 
\usepackage{xcolor}
\usepackage{scrextend} 
\usepackage{xspace}
\usepackage{enumitem}
\usepackage{comment}

\makeatletter
\def\thm@space@setup{%
  \thm@preskip=1.5\topsep \thm@postskip=\topsep
}
\makeatother

\newtheorem{axiom}{Axiom}

\newtheorem{theorem}{Theorem}
\newtheorem{corollary}{Corollary}

\renewcommand{\paragraph}[1]{\vspace{0.3em}\textbf{#1}\ \ }

\newsavebox\ideabox
\newenvironment{idea}
  {\begin{equation}
   \begin{lrbox}{\ideabox}
   \begin{minipage}{\dimexpr\columnwidth-2\leftmargini}
   \setlength{\leftmargini}{0pt}%
   \begin{quote}}
  {\end{quote}
   \end{minipage}
   \end{lrbox}\makebox[0pt]{\usebox{\ideabox}}
   \end{equation}}
   

\LetLtxMacro{\Section}{\S}
\newcommand{\M}{\mathcal{M}}

\renewcommand{\S}{\mathcal{S}}
\newcommand{\A}{\mathcal{A}}
\newcommand{\T}{\mathcal{T}}
\newcommand{\R}{\mathbb{R}}
\renewcommand{\L}{\mathcal{L}}
\renewcommand{\P}{\mathcal{P}}

\newcommand{\supp}{\textrm{supp}}
\newcommand{\Agg}{\textrm{\scalebox{0.7}[1]{$\bm{\Sigma}$}}}
\newcommand{\I}{\mathcal{I}}

\newcommand{\given}{\,\vert\,}
\newcommand{\E}{\mathbb{E}}

\proofinmainfalse

\title{Consistent Aggregation of Objectives with Diverse Time Preferences Requires Non-Markovian Rewards}

\author{{Silviu Pitis}\\University of Toronto and Vector Institute\\\texttt{spitis@cs.toronto.edu}}

\begin{document}

\maketitle

\begin{abstract}
As the capabilities of artificial agents improve, they are being increasingly deployed to service multiple diverse objectives and stakeholders.
However, the composition of these objectives is often performed ad hoc, with no clear justification. 
This paper takes a normative approach to multi-objective agency: from a set of intuitively appealing axioms, it is shown that Markovian aggregation of Markovian reward functions is not possible when the time preference (discount factor) for each objective may vary. 
It follows that optimal multi-objective agents must admit rewards that are non-Markovian with respect to the individual objectives.
To this end, a practical non-Markovian aggregation scheme is proposed, which overcomes the impossibility with only one additional parameter for each objective. 
This work offers new insights into sequential, multi-objective agency and intertemporal choice, and has practical implications for the design of AI systems deployed to serve multiple generations of principals with varying time preference. 
\end{abstract}

\section{Introduction} \label{section_introduction}

The idea that we can associate human preferences with scalar utility values traces back hundreds of years and has found usage in numerous applications \cite{bernoulli1738,solow1970growth,glimcher2011understanding,mullainathan2000behavioral}. One of the most recent, and perhaps most important, is the design of artificial agents.
In the field of reinforcement learning (RL), this idea shows up as the \textit{reward hypothesis}  \citep{sutton1998reinforcement, silver2021reward, bowling2022settling}, which lets us define objectives in terms of a discounted sum of Markovian rewards.  
While foundational results from decision theory \cite{von1953theory,savage1972foundations} and inverse RL \citep{ng2000algorithms,pitis2019rethinking} justify the reward hypothesis when a single objective or principal is considered, complexities arise in multi-objective scenarios \cite{roijers2013survey,vamplew2022scalar}.  
The literature on social choice is largely defined by impossibilities \cite{arrow2010handbook}, and multi-objective composition in the RL and machine learning literature is typically restrictive \cite{singh1998dynamically,nangue2020boolean}, applied without clear justification \cite{haarnoja2018composable}, or based on subjective evaluations of empirical efficacy \cite{du2020compositional}. 
Addressing these limitations is crucial for the development of artificial agents capable of effectively serving the needs of diverse stakeholders.

This paper extends previous normative work in RL by adopting an axiomatic approach to the aggregation of objectives. 
The approach is based on a set of intuitively appealing axioms: the von Neumann-Morgenstern (VNM) axioms, which provide a foundation for rational choice under uncertainty; Pareto indifference, which efficiently incorporates individual preferences; and dynamic consistency, which ensures time-consistent decision-making. 
From these axioms, an impossibility is derived, leading to the conclusion that optimal multi-objective agents with diverse time preferences must have rewards that are non-Markovian with respect to the individual objectives. 
To address this challenge, a practical state space expansion is proposed, which allows for the Markovian aggregation of objectives requiring only one parameter per objective. 
The results prompt an interesting discussion on dynamic preferences and intertemporal choice, leading to a novel ``historical discounting'' strategy that trades off dynamic consistency for intergenerational fairness. Finally, it is shown how both our results can be extended (albeit non-normatively) to stochastic policies. 

The remainder of this paper is organized as follows: Section \ref{section_motivation} motivates the problem by modeling human procrastination behavior as an aggregation of two objectives, work and play, and showing how a plan that appears optimal today may lead to the worst possible future outcome. Section \ref{section_theory} presents the axiomatic background and the key impossibility result. Section \ref{section_solution} presents the corresponding possibility result and a practical state expansion to implement it. Section \ref{section_discussion} relates the results to intertemporal choice, proposes $N$-step commitment and historical discounting strategies for managing intergenerational tradeoffs, extends the results to stochastic policies, and discusses related topics in RL. Section \ref{section_conclusion} concludes with some final thoughts and potential future research directions.

\section{Motivation: The Procrastinator's Peril}\label{section_motivation}

\newcommand{\play}{\texttt{play}\xspace}
\newcommand{\work}{\texttt{work}\xspace}
\newcommand{\playx}{\texttt{p}\xspace}
\newcommand{\workx}{\texttt{w}\xspace}

We begin with a numerical example of how the naive aggregation of otherwise rational preferences can lead to undesirable behavior. The example, which will be referred to throughout as the ``Procastinator's Peril'', involves repeated procrastination, a phenomenon to which the reader might relate. An agent aggregates two competing objectives: \work and \play. At each time step the agent can choose to either \work or \play. The pleasure of \play is mostly from today, and the agent doesn't value future \play nearly as much as present \play. On the other hand, the consequences of work are delayed, so that \work tomorrow is valued approximately as much as \work today.

Let us model the agent's preferences for \work and \play as two separate Markov Decision Processes (MDP), each with state space $\S = \emptyset$ and action space $\A = \{\workx, \playx\}$. In the \play MDP, we have rewards $R(\playx) = 0.5$, $R(\workx) = 0$ and a discount factor of $\gamma_\play = 0.5$. In the \work MDP, we have rewards $R(\playx) = 0$, $R(\workx) = 0.3$ and a discount factor of $\gamma_\work = 0.9$. One way to combine the preferences for \work and \play is to value each trajectory under both MDPs and then add up the values. Not only does this method of aggregation seem reasonable, but it is actually \textit{implied} by some mild and appealing assumptions about preferences (Axioms 1 and 3 in the sequel). Using this approach, the agent assigns values to trajectories as follows: 

\newcommand{\smalldots}{\scalebox{0.7}[1]{$\dots$}}
\begin{table}[h]
\begin{tabularx}{\columnwidth}{p{0.03\columnwidth}p{0.19\columnwidth}Xp{0.12\columnwidth}}
	\toprule
	$\tau_1$ & $\playx, \playx, \playx, \playx \smalldots$ & $V(\tau_1) = \sum_t (0.5)^t \cdot 0.5$ & $= 1.00$ \\
	$\tau_2$ & $\workx, \workx, \workx, \workx \smalldots$ & $V(\tau_2) = \sum_t (0.9)^t \cdot 0.3$ & $= 3.00$ \\
	$\tau_3$ & $\playx, \workx, \workx, \workx \smalldots$ & $V(\tau_3) = 0.5 + 0.9 \cdot V(\tau_2)$ & $= 3.20$ \\
	$\tau_4$ & $\playx, \playx, \workx, \workx \smalldots$ & $V(\tau_3) = 0.75 + 0.9^2 \cdot V(\tau_2)$ & $= 3.18$ \\
	\bottomrule
\end{tabularx}
\end{table}

\noindent We see that the agent most prefers $\tau_3$: one period (and one period only!) of procrastination is optimal. Thus, the agent procrastinates and chooses to \play today, planning to \work from tomorrow onward. Come tomorrow, however, the agent is faced with the same choice, and once again puts off \work in favor of \play. The process repeats and the agent ends up with the least preferred alternative $\tau_1$. 

This plainly irrational behavior illustrates the impossibility theorem. Observe that the optimal policy $\tau_3$ is non-Markovian---it must remember that the agent has previously chosen \play in order to \work forever. But any MDP has a stationary optimal policy \cite{puterman2014markov}, so it follows that we need rewards that are non-Markovian with respect to the original state-action space. Alternatively, we will see in Subsection \ref{subsection_expansion} that we can expand the state space to make the optimal policy Markovian.

\section{Impossibility of Dynamically Consistent, Pareto Indifferent Aggregation}\label{section_theory}

\paragraph{Notation} We assume familiarity with Markov Decision Processes (MDPs) \citep{puterman2014markov} and reinforcement learning (RL) \citep{sutton1998reinforcement}. 
We denote an MDP by $\M = \langle \S, \A, T, R, \gamma \rangle$, where $\gamma:\S\times\A \to \R^+$ is a state-action dependent discount function. This generalizes the usual ``fixed'' $\gamma \in \mathbb{R}$ and covers both the episodic and continuing settings \cite{white2017UnifyingTS}.
We use lowercase letters for generic instances, e.g. $s \in \S$, and denote distributions using a tilde, e.g. $\tilde s$. 
In contrast to standard notation we write both state- and state-action value functions using a unified notation that emphasizes the dependence of each on the future policy: we write $V(s, \pi)$ and $V(s, a\pi)$ instead of $V^\pi(s)$ and $Q^\pi(s, a)$. We extend $V$ to operate on probability distributions of states, $V(\tilde s, \Pi) = \E_{s \sim \tilde s}V(s, \Pi)$, and we allow for non-stationary, history dependent policies (denoted by uppercase $\Pi, \Omega$). With this notation, we can understand $V$ as an expected utility function defined over prospects of the form $(\tilde s, \Pi)$. We use the letter $h$ to denote histories (trajectories of states and actions)---these may terminate on either a state or action, as may be inferred from the context. For convenience, we sometimes  directly concatenate histories, states, actions and/or policies (e.g., $hs$, $sa$, $s\Pi$, $a\Pi$) to represent trajectory segments and/or the associated stochastic processes. For simplicity, we assume finite $|\S|, |\A|$. 

\subsection{Representing rational preferences}\label{subsection_repr_rational}

This paper is concerned with the representation of aggregated preferences, where both the aggregation and its individual components satisfy certain axioms of rationality. We define the objects of preference to be the stochastic processes (``prospects'') generated by following (potentially non-stationary and stochastic) policy $\Pi$ from state $s$.  Distributions or ``lotteries'' over these prospects may be represented by (not necessarily unique) tuples of state lottery and policy $(\tilde s, \Pi) \in \L(\S) \times \bm{\Pi} =: \L(\P)$. 
We write $(\tilde s_1, \Pi) \succ (\tilde s_2, \Omega)$ if $(\tilde s_1, \Pi)$ is strictly preferred to $(\tilde s_2, \Omega)$ under preference relation $\succ$. 

To be ``rational'', we require $\succ$ to satisfy the ``VNM axioms'' \citep{von1953theory}, which is capture in Axiom \ref{axiom_VNM}:

\begin{axiom}[VNM]\label{axiom_VNM}
For all $\tilde p, \tilde q, \tilde r \in \L(\P)$ we have:\\[-1.3em]
\begin{addmargin}[2em]{0em} 
\normalfont
\hspace{-1.5em}\textit{Asymmetry}:  
	If $\tilde{p} \succ \tilde{q}$, then not $\tilde{q} \succ \tilde{p}$;\\
	\hphantom{x}\hspace{-2em}\textit{Negative Transitivity}: 
	If not $\tilde{p}\!\succ\!\tilde{q}$ and not $\tilde{q}\!\succ\!\tilde{r}$, not $\tilde{p}\!\succ\!\tilde{r}$;\\
\hphantom{x}\hspace{-2em}\textit{Independence}:
	If $\tilde{p}\!\succ\!\tilde{q}$, then $\alpha\tilde{p} + (1-\alpha)\tilde{r} \succ \alpha\tilde{q} + (1-\alpha)\tilde{r}$, $\forall \alpha \in (0, 1]$;\\
\hphantom{x}\hspace{-2em}\textit{Continuity}: 
	If $\tilde{p} \succ \tilde{q} \succ \tilde{r}$, then $\exists\ \alpha, \beta \in (0, 1)$ such that $\alpha\tilde{p} + (1-\alpha)\tilde{r} \succ \tilde{q} \succ \beta\tilde{p} + (1-\beta)\tilde{r}$;
\end{addmargin}
where $\alpha \tilde{p} + (1-\alpha) \tilde{q}$ denotes the mixture lottery with $\alpha\%$ chance of $\tilde{p}$ and $(1 - \alpha)\%$ chance of $\tilde{q}$.
\end{axiom}

Asymmetry and negative transitivity together form the basic requirements of a strict preference relation---equivalent to completeness and transitivity of the corresponding weak preference relation, $\succeq$ (defined as $p \succeq q \Leftrightarrow q \not\succ p$). Independence can be understood as an irrelevance of unrealized alternatives, or consequentialist, axiom: given that the $\alpha\%$ branch of the mixture is realized, preference between $\tilde p$ and $\tilde q$ is independent of the rest of the mixture (i.e., what could have happened on the $(1 - \alpha)\%$ branch). Finally, continuity is a natural assumption given that probabilities are continuous. 

\noindent We further require $\succ$ to be \textit{dynamically consistent} \citep{sobel1975ordinal,kreps1978temporal,machina1989dynamic}:%

\begin{axiom}[Dynamic consistency]\label{axiom_DC}
$(s, a\Pi) \succ (s, a\Omega)$ if and only if $(T(s, a), \Pi) \succ (T(s, a), \Omega)$ where $T(s, a)$ is the distribution over next states after taking action $a$ in state $s$. 
\end{axiom}

This axiom rules out the irrational behavior in the Procrastinator's Peril, by requiring today's preferences for tomorrow's actions to be the same as tomorrow's preferences. While some of these axioms (particularly independence and dynamic consistency) have been the subject of debate (see, e.g., \cite{machina1989dynamic}), note that the standard RL model is \textit{more} restrictive than they require \cite{pitis2019rethinking}.

\noindent The axioms produce two key results that we rely on (see \citet{kreps1988notes} and \citet{pitis2019rethinking} for proofs):

\begin{theorem}[Expected utility representation]\label{theorem_VNM} 
The relation $\succ$ defined on the set $\L(\P)$ satisfies Axiom 1 if and only if there exists a function $V: \P \to \mathbb{R}$ such that, $\forall\ \tilde{p}, \tilde{q} \in \L(\P)$:
\begin{equation*}
\tilde{p} \succ \tilde{q} \iff \sum\nolimits_{z \in \supp(\tilde p)} \tilde{p}(z)V(z) > \sum\nolimits_{z \in \supp(\tilde q)} \tilde{q}(z)V(z).
\end{equation*}
Another function $V^\dagger$ gives this representation iff $V^\dagger$ is a positive affine transformation of $V$. 
\end{theorem}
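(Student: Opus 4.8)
The plan is to prove the two directions separately and then the uniqueness claim. The reverse (``easy'') direction is a direct verification: given a $V$ that yields the stated representation, I would check each clause of Axiom~\ref{axiom_VNM} by appealing to the order properties of $\mathbb{R}$ and to the fact that the functional $\tilde p \mapsto \sum_{z \in \supp(\tilde p)} \tilde p(z) V(z)$ is affine in the mixing weights. Asymmetry and negative transitivity transfer from asymmetry and transitivity of $>$ on $\mathbb{R}$; independence holds because the value of $\alpha \tilde p + (1-\alpha)\tilde r$ is $\alpha\sum \tilde p(z)V(z) + (1-\alpha)\sum \tilde r(z)V(z)$, so the common $\tilde r$-term cancels and the strict inequality is preserved; and continuity holds because this value is affine, hence continuous, in $\alpha$, so the required intermediate comparisons are attainable.

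For the forward (``hard'') direction I would construct $V$ from $\succ$ in the classical VNM manner. First I would record that, under the definition $\tilde p \succeq \tilde q \iff \tilde q \not\succ \tilde p$, asymmetry and negative transitivity make $\succeq$ a complete, transitive weak order with symmetric part $\sim$. Next I would prove two workhorse lemmas from independence and continuity: \emph{(i) mixture monotonicity}, that if $\tilde p \succ \tilde q$ then $\alpha \tilde p + (1-\alpha)\tilde q \succ \beta \tilde p + (1-\beta)\tilde q$ whenever $1 \ge \alpha > \beta \ge 0$; and \emph{(ii) a calibration lemma}, that whenever $\tilde p \succ \tilde q \succ \tilde r$ there is a unique $\lambda \in (0,1)$ with $\lambda \tilde p + (1-\lambda)\tilde r \sim \tilde q$. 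Uniqueness of $\lambda$ follows from (i), and its existence from continuity together with the weak-order structure.

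With these in hand I would fix reference prospects $z_1 \succ z_0$ (if no such pair exists, every prospect is indifferent and $V \equiv 0$ works) and define $V(z)$, for each prospect $z \in \P$, as the calibrating weight that makes $z$ indifferent to the appropriate mixture of $z_1$ and $z_0$, extrapolating beyond $[0,1]$ by using $z$ itself as an endpoint in the calibration when $z \succ z_1$ or $z \prec z_0$. Because every lottery in $\L(\P)$ has finite support, I can restrict attention to the finite sub-collection of prospects it involves, where a best and a worst element exist; this is what legitimizes the calibration and its extension. The crux of the argument---and the step I expect to be the main obstacle---is showing that this $V$ induces the same order as $\succ$, equivalently that it is \emph{linear} over mixtures; this is exactly where independence does the heavy lifting, via repeated substitution of each prospect $z$ by its indifferent reference mixture $V(z)\,z_1 + (1-V(z))\,z_0$ inside a compound lottery, after which mixture monotonicity lets me read off the induced preference from $\sum_z \tilde p(z) V(z)$.

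Finally, for uniqueness I would suppose $V$ and $V^\dagger$ both give the representation. Both are then affine in the mixing weights and induce the same order, so evaluating the calibration identity $z \sim V(z)\,z_1 + (1-V(z))\,z_0$ under $V^\dagger$ forces $V^\dagger(z) = a\,V(z) + b$ with slope $a = V^\dagger(z_1) - V^\dagger(z_0) > 0$ and intercept $b = V^\dagger(z_0)$. Conversely any positive affine transform of a representing $V$ clearly represents $\succ$, which completes the equivalence.
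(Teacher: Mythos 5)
Your proposal is correct and follows essentially the same route as the proof the paper relies on: the paper does not prove Theorem~\ref{theorem_VNM} itself but defers to the classical treatments it cites (Kreps's \emph{Notes on the Theory of Choice} and Pitis 2019), and your argument---the easy direction by direct verification, the hard direction via mixture monotonicity, a calibration lemma, fixed reference prospects $z_1 \succ z_0$ with substitution of indifferent reference mixtures (using finite support to restrict to a sub-collection with best and worst elements), and the standard affine-uniqueness computation---is precisely that classical proof. No gaps worth flagging at this level of detail.
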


Using Theorem \ref{theorem_VNM}, we extend the domain of value function $V$ to $\L(\P)$ as $V(\tilde{p}) = \sum_{z} \tilde{p}(z)V(z)$.

\begin{theorem}[Generalized Bellman representation]\label{theorem_bellman}
If $\succ$ satisfies Axioms 1-2 and $V$ is an expected utility representation of $\succ$, there exist $R: S \times A \to \mathbb{R}$, $\gamma: S \times A \to \mathbb{R}^+$ such that $\forall\ s, a, \Pi$, 
\begin{equation*}
V(s, a\Pi) = R(s, a) + \gamma(s, a)V(T(s, a), \Pi).
\end{equation*}
\end{theorem}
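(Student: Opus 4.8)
The plan is to fix an arbitrary state-action pair $(s,a)$ and to compare the two maps $\Pi \mapsto V(s, a\Pi)$ and $\Pi \mapsto V(T(s,a), \Pi)$, viewed as functions of the continuation policy $\Pi$. Dynamic consistency (Axiom \ref{axiom_DC}) says precisely that these two maps induce the same strict order on continuations, while the expected-utility structure from Theorem \ref{theorem_VNM} says each is affine with respect to mixtures. The uniqueness clause of Theorem \ref{theorem_VNM} then forces one map to be a positive affine transformation of the other, and reading off the slope and intercept as $\gamma(s,a)$ and $R(s,a)$ yields the claimed identity.

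In more detail, I would first establish affinity. Because the initial transition from $s$ under $a$ into $T(s,a)$ is identical for every continuation, the mixture lottery $\alpha(s,a\Pi) + (1-\alpha)(s,a\Omega)$ is the same prospect (the same distribution over trajectories) as $(s, a(\alpha\Pi + (1-\alpha)\Omega))$, where $\alpha\Pi + (1-\alpha)\Omega$ denotes the policy that commits to $\Pi$ with probability $\alpha$ and to $\Omega$ otherwise. Deleting the common prefix gives the analogous identity $\alpha(T(s,a),\Pi) + (1-\alpha)(T(s,a),\Omega) = (T(s,a), \alpha\Pi + (1-\alpha)\Omega)$. Combined with the linearity of the extended $V$ over $\L(\P)$, this shows that both $f(\Pi) := V(s,a\Pi)$ and $g(\Pi) := V(T(s,a),\Pi)$ are affine in mixtures of continuations.

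Next I would invoke Axiom \ref{axiom_DC} in its $V$-form, namely $f(\Pi) > f(\Omega) \iff g(\Pi) > g(\Omega)$, so that $f$ and $g$ represent one and the same strict preference order over continuation lotteries. Since both are affine representations of a fixed order, the uniqueness statement of Theorem \ref{theorem_VNM} (two expected-utility representations of a given order differ by a positive affine transformation) supplies constants $\gamma(s,a) > 0$ and $R(s,a) \in \mathbb{R}$ with $f = \gamma(s,a)\, g + R(s,a)$ for all $\Pi$; positivity of the slope is exactly what guarantees $\gamma(s,a) \in \mathbb{R}^+$. Carrying this out for each $(s,a)$ defines the functions $R$ and $\gamma$ and establishes the displayed Bellman recursion.

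The main obstacle is transferring the VNM uniqueness clause from the global domain $\L(\P)$ to the restricted ``slice'' of continuation lotteries at fixed $(s,a)$: I must verify that each embedding $\Pi \mapsto (s,a\Pi)$ and $\Pi \mapsto (T(s,a),\Pi)$ is mixture-preserving, so that $f$ and $g$ are genuinely affine and the induced orders inherit the structure needed for rigidity of the representation. A secondary subtlety is the degenerate case in which the continuation order is trivial (all $\Pi$ mutually indifferent), where $f$ and $g$ are both constant and the slope is underdetermined; here one may simply set $\gamma(s,a) = 1$ and $R(s,a) = f(\Pi) - g(\Pi)$, so the representation continues to hold.
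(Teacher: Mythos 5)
The paper does not prove this theorem itself but defers to \citet{kreps1988notes} and \citet{pitis2019rethinking}, and your argument is precisely the standard one given there: dynamic consistency makes $\Pi \mapsto V(s,a\Pi)$ and $\Pi \mapsto V(T(s,a),\Pi)$ two mixture-affine representations of the same order on continuations, so the uniqueness clause of Theorem \ref{theorem_VNM} forces a positive affine relation whose slope and intercept define $\gamma(s,a)$ and $R(s,a)$. Your attention to the mixture-preservation of the embeddings and to the degenerate all-indifferent case is exactly the right bookkeeping; the proof is correct.
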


\paragraph{Remark 3.1.1} Instead of preferences over stochastic processes of potential futures, one could begin with preferences over trajectories \cite{wirth2017survey,shakerinava2022utility,bowling2022settling}. The author takes issue with this approach, however, as it's unclear that such preferences should satisfy \textit{Asymmetry} or \textit{Independence} without additional assumptions (humans often consider counterfactual outcomes when evaluating the desirability of a trajectory) \cite{pitis2019rethinking}. By using Theorem \ref{theorem_bellman} to unroll prospects, one can extend preferences over prospects to define preferences over trajectories according to their discounted reward.

\paragraph{Remark 3.1.2} Theorem 2, as it appeared in \citet{pitis2019rethinking}, required an additional, explicit ``Irrelevance of unrealizable actions'' axiom, since prospects were defined as tuples $(\tilde s, \Pi)$. This property is implicit in our redefinition of prospects as stochastic processes.

\paragraph{Remark 3.1.3} In this line of reasoning only the preference relation $\succ$ is primitive; $V$ and its Bellman form ($R, \gamma$) are simply representations of $\succ$ whose existence is guaranteed by the axioms. Not all numerical representations of $\succ$ have these forms \citep{weymark2005measurement}. In particular, (strictly) monotonically increasing transforms preserve ordering, so that any increasing transform $V^\dagger$ of a Theorem \ref{theorem_VNM} representation $V$ is itself a valid numerical representation of $\succ$ (although lotteries will no longer be valued by the expectation over their atoms unless the transform is affine, per Theorem \ref{theorem_VNM}).

\subsection{Representing rational aggregation}

Let us now consider the aggregation of several preferences. 
These may be the preferences of an agent's several principals or preferences representing a single individual's competing interests. 
Note at the outset that it is quite natural for different objectives or principals to have differing time preference. We saw one example in the Procrastinator's Peril, but we can also consider a household robot that seeks to aggregate the preferences of Alice and her husband Bob, for whom there is no reason to assume equal time preference \cite{frederick2002time}. 

An intuitively appealing axiom for aggregation is Pareto indifference, which says that if each individual preference is indifferent between two alternatives, then so too is the aggregate preference. 

\begin{axiom}[Pareto indifference]\label{axiom_pareto}
	If $\tilde p \approx_i \tilde q\ (\forall i \in \I)$,  $\tilde p \approx_\Agg \tilde q$.
\end{axiom}

\noindent Here, $\tilde p \approx \tilde q$ means indifference (not $\tilde p \succ \tilde q$ and not $\tilde q \succ \tilde p$), so that $\approx_i$ is the $i$th individual indifference relation, $\I$ is a finite index set over the individuals, and $\approx_\Agg$ indicates the aggregate relation. There exist stronger variants of Pareto property that require monotonic aggregation (e.g., if all individuals prefer $\tilde p$, so too does the aggregate; see Axiom 3$'$ in Subsection \ref{subsection_tension}). We opt for Pareto indifference to accommodate potentially deviant individual preferences (e.g., if all individuals are indifferent but for a sociopath, the aggregate preference may be opposite of the sociopath's). 

If we require individual and aggregate preferences to satisfy Axiom 1 and, jointly, Axiom 3, we obtain a third key result due to  \citet{harsanyi1955cardinal}. (See \citet{hammond1992harsanyi} for proof).

\begin{theorem}[Harsanyi's representation]\label{theorem_harsanyi}
	Consider individual preference relations $\{\succ_i; i \in \I\}$ and aggregated preference relation $\succ_\Agg$, each defined on the set $\L(\P)$, that individually satisfy Axiom 1 and jointly satisfy Axiom 3. If $\{V_i; i \in \I\}$ and $V_\Agg$ are expected utility representations of $\{\succ_i; i \in \I\}$ and $\succ_\Agg$, respectively, then there exist real-valued constant $c$ and weights $\{w_i; i \in \I\}$ such that: $$V_\Agg(\tilde p) = c + \sum\nolimits_{i \in \I}w_iV_i(\tilde p).$$
    That is, the aggregate value can be expressed as a weighted sum of individual values (plus a constant).
\end{theorem}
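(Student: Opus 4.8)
The plan is to reduce Harsanyi's theorem to a standard fact of linear algebra: a linear functional that vanishes on the common kernel of finitely many linear functionals must lie in their span. By Theorem \ref{theorem_VNM}, each $V_i$ and $V_\Agg$ is affine on lotteries, i.e. $V(\tilde p) = \sum_z \tilde p(z) V(z)$, so what must be shown is the affine dependence $V_\Agg = c + \sum_i w_i V_i$. The immediate obstacle is that $\L(\P)$ is only a convex mixture set, not a vector space, so I first linearize before any kernel argument can be applied.

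To linearize, I would embed $\L(\P)$ into the real vector space $X$ of finitely supported signed measures on $\P$. Each representation extends to a \emph{linear} functional on $X$ by the same expectation formula: set $\hat V_i(\mu) = \sum_z \mu(z) V_i(z)$ and $\hat V_\Agg(\mu) = \sum_z \mu(z) V_\Agg(z)$, and let $m(\mu) = \sum_z \mu(z)$ be the total-mass functional. Since $\I$ is finite, this is a finite collection of linear functionals on $X$. The content of Pareto indifference (Axiom \ref{axiom_pareto}) is then expressible purely through the $V_i$: because $V_i$ represents $\succ_i$, the equality $V_i(\tilde p) = V_i(\tilde q)$ is exactly $\tilde p \approx_i \tilde q$, so Axiom \ref{axiom_pareto} says that $V_i(\tilde p) = V_i(\tilde q)$ for all $i \in \I$ forces $V_\Agg(\tilde p) = V_\Agg(\tilde q)$.

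Next I would promote this statement about probability measures into a kernel-containment statement on all of $X$. Take any $\mu \in X$ with $m(\mu) = 0$ and $\hat V_i(\mu) = 0$ for every $i$. Using the decomposition $\mu = \mu^+ - \mu^-$ into nonnegative parts, the two parts have equal total mass; if that mass is positive, normalizing gives probability measures $\tilde p, \tilde q$ with $\mu$ a positive multiple of $\tilde p - \tilde q$, and $\hat V_i(\mu) = 0$ becomes $V_i(\tilde p) = V_i(\tilde q)$ for all $i$ (the degenerate case $\mu = 0$ is trivial). Pareto indifference then yields $V_\Agg(\tilde p) = V_\Agg(\tilde q)$, i.e. $\hat V_\Agg(\mu) = 0$. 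Hence every $\mu$ in the common kernel of $\{\hat V_i\}_{i \in \I}$ together with $m$ also lies in the kernel of $\hat V_\Agg$.

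Finally I would invoke the linear-algebra lemma: the map $T \colon X \to \R^{|\I|+1}$, $T(\mu) = \big((\hat V_i(\mu))_{i\in\I},\, m(\mu)\big)$, satisfies $\ker T \subseteq \ker \hat V_\Agg$, so $\hat V_\Agg$ factors as a linear functional of $T$ (define it on $\mathrm{Image}(T)$ and extend linearly to $\R^{|\I|+1}$), giving $\hat V_\Agg = \sum_i w_i \hat V_i + c\, m$ for suitable reals $\{w_i\}$ and $c$. Restricting to probability measures, where $m \equiv 1$, yields exactly $V_\Agg(\tilde p) = c + \sum_i w_i V_i(\tilde p)$. I expect the main subtlety to be the bookkeeping around the constant $c$: Pareto indifference constrains only \emph{differences} of lotteries (the mass-zero directions of $X$), which is precisely why the weights $w_i$ are pinned down while an overall additive constant remains free, and adjoining the mass functional $m$ is the device that both captures this free direction and converts the affine representation into a clean linear one.
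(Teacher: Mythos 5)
Your proposal is correct, and it is essentially the standard proof of Harsanyi's aggregation theorem: the paper does not prove Theorem \ref{theorem_harsanyi} itself but defers to \citet{hammond1992harsanyi}, and the argument given there (and in Border's and related treatments) is exactly your reduction --- linearize the mixture-affine representations on the cone of finitely supported signed measures, translate Pareto indifference into a kernel containment via the Jordan decomposition of a mass-zero measure, and apply the fact that a linear functional vanishing on the common kernel of finitely many functionals lies in their span, with the total-mass functional adjoined to absorb the additive constant $c$. The only point worth checking in this paper's setting is that the normalized positive and negative parts $\mu^{+}/t$, $\mu^{-}/t$ actually lie in $\L(\P) = \L(\S)\times\bm{\Pi}$ so that Axiom \ref{axiom_pareto} applies to them; since policies may be history dependent and $\L(\bm{\Pi}) = \bm{\Pi}$, any finitely supported lottery over prospects is representable as such a tuple, so this goes through.
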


According to Harsanyi's representation theorem, the aggregated value function is a function of the individual value functions, \textit{and nothing else}. In other words, Pareto indifferent aggregation of VNM preferences that results in VNM preference is necessarily \textit{context-free}---the same weights $\{w_i\}$ apply regardless of state and policy.

We will also make use of two technical conditions to eliminate certain edge cases. Though sometimes left implicit, these are both common requirements for aggregation functions \cite{arrow2010handbook}. 

\begin{axiom}[Technical conditions on $\succ_\Agg$]\label{axiom_ud}\,\\[-1.3em]
\begin{addmargin}[2em]{0em}
\normalfont
\hspace{-1.5em}\textit{Unrestricted Domain}:  
	$\succ_\Agg$ is defined for all valid individual preference sets $\{\succ_i;i \in \I\}$.\\
	\hphantom{x}\hspace{-2em}\textit{Sensitivity}: 
	$\forall i \in \I$, holding $\succ_j, j \neq i$ constant, there exist $\succ_i^1, \succ_i^2$ resulting in different $\succ_\Agg$.
\end{addmargin}
\end{axiom}
The first condition allows us to consider conflicting objectives with different time preference. The second condition implies that the weights $w_i$ in Theorem 3 are nonzero.

\paragraph{Remark 3.2.1} It is worth clarifying here the relation between Harsanyi's theorem and a related class of aggregation theorems, occasionally cited within the machine learning literature (e.g., \cite{bakker2022fine}), based on axioms originating with Debreu \cite{debreu1960topological}. One instance of this class states that any aggregate preference satisfying six reasonable axioms can be represented in the form: $V_\Agg(\tilde p) = \sum_{i \in \I}m(V_i(\tilde p))$, where $m$ is a strictly increasing monotonic function from the family $\{x^p\given 0 < p \leq 1\}\cup\{\log(x)\given p=0\}\cup\{-x^p \given p < 0\}$ \cite{moulin2004fair}. If we (1) drop the symmetry axiom from this theorem to obtain variable weights $w_i$, and (2) add a monotonicity axiom to Harsanyi's theorem to ensure positive weights $w_i$ \cite{harsanyi1955cardinal}, then the only difference between the theorems is the presence of monotonic function $m$. But note that applying $m$ to each $V_i$ or to $V_\Agg$ individually does not change the preference ranking they represent; it does, however, decide whether $V_i$ and $V_\Agg$ are VNM representations of $\succ$. Thus, we can understand Harsanyi's theorem as saying: if $V_i, V_\Agg$ are VNM representations of $\succ$, then $m$ must be linear. Or conversely, if $m$ is non-linear ($p \neq 1$), $V_i$ and $V_\Agg$ are not VNM representations. 

\paragraph{Remark 3.2.2} A caveat of Harsanyi's theorem is that it implicitly assumes that all individual preferences, and the aggregate preference, use the same set of agreed upon, ``objective'' probabilities. This is normatively justifiable if we use the same probability distribution (e.g., that of the aggregating agent) to impose ``ideal'' preferences $\succ_i$ on each individual, which may differ from their implicit subjective or revealed preferences \cite{savage1972foundations}. As noted by \citet{desai2018negotiable}, Harsanyi's theorem fails if the preferences being aggregated use subjective probabilities. Note, however, that the outcome of the ``bargaining'' construction in \citet{desai2018negotiable} is socially suboptimal when the aggregrating agent has better information than the principals, which suggests that effort should be made to unify subjective probabilities. We leave exploration of this to future work. 

\subsection{Impossibility result}

None of Theorems \ref{theorem_VNM}-\ref{theorem_harsanyi} assume all Axioms \ref{axiom_VNM}-\ref{axiom_ud}. Doing so leads to our key result, as follows.
  
\begin{theorem}[Impossibility]\label{theorem_impossibility}
Assume there exist distinct policies, $\Pi,\Omega,\Lambda$, none of which is a mixture (i.e., convex combination) of the other two, and consider the aggregation of arbitrary individual preference relations $\{\succ_i; i \in \I\}$ defined on $\L(\P)$ that individually satisfy Axioms 1-2. There does not exist aggregated preference relation $\succ_\Agg$ satisfying Axioms 1-4.
\end{theorem}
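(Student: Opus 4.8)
The plan is to combine the three representation theorems to show that dynamic consistency of the aggregate forces every individual discount function to coincide with the aggregate discount function at each state--action pair, and then to invoke Unrestricted Domain to supply individuals whose discounts disagree, producing a contradiction.

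I would argue by contradiction, assuming an aggregation rule $\succ_\Agg$ satisfies Axioms 1-4. First, Theorem \ref{theorem_VNM} (from Axiom 1) gives expected-utility representations $V_i$ of each $\succ_i$ and $V_\Agg$ of $\succ_\Agg$. Since each $\succ_i$ and $\succ_\Agg$ satisfy Axioms 1-2, Theorem \ref{theorem_bellman} gives Bellman forms $V_i(s, a\Pi) = R_i(s,a) + \gamma_i(s,a)\,V_i(T(s,a), \Pi)$ and $V_\Agg(s, a\Pi) = R_\Agg(s,a) + \gamma_\Agg(s,a)\,V_\Agg(T(s,a), \Pi)$. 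Since the individuals satisfy Axiom 1 and jointly satisfy Axiom 3, Theorem \ref{theorem_harsanyi} gives $V_\Agg = c + \sum_{i} w_i V_i$, and the Sensitivity clause of Axiom 4 ensures each $w_i \neq 0$.

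Next I would equate the two expressions for $V_\Agg(s, a\Pi)$: substitute the Bellman form of each $V_i$ into the Harsanyi sum, and use the Harsanyi form once more to rewrite $V_\Agg(T(s,a), \Pi)$ on the other side. The reward and constant terms do not depend on the continuation policy, so requiring equality for every $\Pi$ yields
\begin{equation*}
\sum\nolimits_{i \in \I} w_i\bigl[\gamma_\Agg(s,a) - \gamma_i(s,a)\bigr]\,V_i(T(s,a), \Pi) = \kappa,
\end{equation*}
where $\kappa$ is independent of $\Pi$. By Theorem \ref{theorem_VNM} each map $\Pi \mapsto V_i(T(s,a), \Pi)$ is affine over mixtures of policies, so the left-hand side is an affine functional of the continuation prospect that must be constant. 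To force its coefficients to vanish, I would use Unrestricted Domain to instantiate (at least) two individuals whose discounts differ at some $(s,a)$ and whose continuation value functions $V_i(T(s,a), \cdot)$ are affinely independent; the separation between the single number $\gamma_i(s,a)$ and the downstream rewards and discounts that determine $V_i(T(s,a), \cdot)$ makes such a construction available via Theorem \ref{theorem_bellman}. The hypothesis that $\Pi, \Omega, \Lambda$ are distinct with none a mixture of the other two then supplies three affinely independent continuation prospects; evaluating the displayed identity at all three gives a linear system whose coefficient matrix is nonsingular precisely because the prospects are affinely independent, forcing $w_i[\gamma_\Agg(s,a) - \gamma_i(s,a)] = 0$, and hence, since $w_i \neq 0$, $\gamma_i(s,a) = \gamma_\Agg(s,a)$ for each of the two individuals. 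This contradicts their disagreeing discounts.

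The main obstacle is this final vanishing-coefficient step. The affine independence of the three policies is exactly what prevents the continuation value profiles from collapsing onto a line: were $\Lambda$ a mixture of $\Pi$ and $\Omega$, affinity of each $V_i$ would force the third value vector to be the matching convex combination of the first two, the coefficient matrix would be singular, and no contradiction would follow. Care is also needed to confirm that Unrestricted Domain genuinely admits a profile realizing both disagreeing discounts and affinely independent continuation values, and---when $|\I| > 2$---to arrange the remaining individuals (for instance into two discount groups) so that no cancellation among the Harsanyi weights defeats the argument; the Sensitivity condition, which keeps each $w_i \neq 0$, is what underwrites the clean two-individual case.
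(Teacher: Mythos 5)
Your proposal is correct and follows essentially the same route as the paper: both apply the Bellman and Harsanyi representations to $V_\Agg(s,a\Pi)$ in the two possible orders to obtain $\sum_i w_i\left[\gamma_i(s,a)-\gamma_\Agg(s,a)\right]V_i(T(s,a),\Pi) = \text{const}$, and then force the coefficients to vanish so that $\gamma_i(s,a)=\gamma_\Agg(s,a)$ for every $i$, contradicting Unrestricted Domain. Your affine-independence framing of the final step is equivalent to the paper's explicit construction --- conflicting preferences normalized so that $V_i(T(s,a),\Lambda)=0$ with distinct indifference mixtures $\Pi_{\beta_1}\neq\Pi_{\beta_2}$ of $\Pi$ and $\Omega$, where $\beta_1\neq\beta_2$ is precisely the nonsingularity of your $2\times 2$ value matrix --- so the one thing your sketch leaves implicit (exhibiting a profile whose continuation-value vectors at $\Pi,\Omega,\Lambda$ are genuinely affinely independent) is exactly what the paper's equations (\ref{eq:choice_of_pref})--(\ref{eq:choice_of_values}) supply.
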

\vspace{-\baselineskip}

\ifproofinmain
   \begin{proof} 
        Fix $s,a$. Using Theorem 2, choose $\{r_{i}, \gamma_{i}\}, r_{\Agg}, \gamma_{\Agg}$ to represent individual and aggregate preferences over $\L(\P)$. Define mixture policy $\Pi_\beta := \beta\Pi + (1-\beta)\Omega$.  W.l.o.g. assume $|\I|=2$. 
        
        We use Axiom 4 (Unrestricted Domain) to set
        \begin{equation}\label{eq:choice_of_pref}
        \begin{split}
        (T(s, a), \Pi) \succ_1 (T(s, a), \Lambda) \succ_1 (T(s, a), \Omega), \\ (T(s, a), \Omega) \succ_2 (T(s, a), \Lambda) \succ_2 (T(s, a), \Pi),
        \end{split}
        \end{equation}
        and, using Theorem 1 to shift $V_1, V_2$, we have
        \begin{equation}\label{eq:choice_of_values}
        \begin{split}
            &V_{1}(T(s, a), \Lambda) = V_{2}(T(s, a), \Lambda) = 0,\\[2pt]
            &V_{1}(T(s, a), \Pi) > 0 > V_{1}(T(s, a), \Omega) \ \ \ \textrm{s.t.} \ \ \  V_{1}(T(s, a), \Pi_{\beta_1}) = 0,\\[2pt]
            &V_{2}(T(s, a), \Omega) > 0 > V_{2}(T(s, a), \Pi) \ \ \ \textrm{s.t.} \ \ \  V_{2}(T(s, a), \Pi_{\beta_2}) = 0,
        \end{split}
        \end{equation}
        for some $\beta_1, \beta_2 \in (0, 1)$ with $\beta_1 \neq \beta_2$ (again appealing to Unrestricted Domain). 
        Intuitively, equation \ref{eq:choice_of_pref} requires individual preferences to conflict, and the $\beta_1 \neq \beta_2$ condition requires them to be non-symmetric about $\Lambda$. Equation \ref{eq:choice_of_values} is merely a convenient choice of numerical representation.

        We now apply Theorem 3 followed by Theorem 2 to the expression $V_\Agg(s, a\Pi_\beta) - V_\Agg(s, a\Lambda)$. We again invoke Theorem 1 to shift $V_\Agg$ and eliminate the constant term, so that by Theorem 3 $\exists\,\{w_i\}$ for which, 
        \begin{equation}\label{eq:1}
        \begin{split}
            V_\Agg(s, a\Pi_\beta) - V_\Agg(s, a\Lambda) &= \sum_{i \in \I}w_{i}V_i(s, a\Pi_\beta) - \sum_{i \in \I}w_{i}V_i(s, a\Lambda)\\
            &=  w_{1}\gamma_{1}(s, a)\left[V_{1}(T(s, a), \Pi_\beta) - V_{1}(T(s, a), \Lambda)\right] +\\
            &\quad\quad w_{2}\gamma_{2}(s, a)\left[V_{2}(T(s, a), \Pi_\beta) - V_{2}(T(s, a), \Lambda)\right]\\[2pt]
            &= w_{1}\gamma_{1}(s, a)V_{1}(T(s, a), \Pi_\beta) + w_{2}\gamma_{2}(s, a)V_{2}(T(s, a), \Pi_\beta).
        \end{split}
        \end{equation}
        where the second line applies Theorem 2, with rewards cancelling out. 
        
        Alternatively, applying Theorem 2 followed by Theorem 3 to the same expression yields, 
        \begin{equation}\label{eq:2}
        \begin{split}
        V_\Agg(s, a\Pi_\beta) - V_\Agg(s, a\Lambda) 
        &= \gamma_\Agg(s, a)V_\Agg(T(s, a), \Pi_\beta) - \gamma_\Agg(s, a)V_\Agg(T(s, a), \Lambda)\\
        &= w_{1}\gamma_\Agg(s, a)\left[V_{1}(T(s, a), \Pi_\beta) - V_{1}(T(s, a), \Lambda)\right] +\\
        &\quad\quad w_2\gamma_\Agg(s, a)\left[V_{2}(T(s, a), \Pi_\beta) - V_{2}(T(s, a), \Lambda)\right]\\[2pt]
        &= w_{1}\gamma_\Agg(s, a)V_{1}(T(s, a), \Pi_\beta) + w_2\gamma_\Agg(s, a)V_{2}(T(s, a), \Pi_\beta)
        \end{split}
        \end{equation}

        Combining equations \ref{eq:1} and \ref{eq:2}, we obtain:
        \begin{equation}\label{eq:final}
        \begin{split}
            &\hspace*{1.39em}w_{1}\gamma_{1}(s, a)V_{1}(T(s, a), \Pi_\beta) + w_{2}\gamma_{2}(s, a)V_{2}(T(s, a), \Pi_\beta) \\&= w_{1}\gamma_\Agg(s, a)V_{1}(T(s, a), \Pi_\beta) + w_2\gamma_\Agg(s, a)V_{2}(T(s, a), \Pi_\beta).
        \end{split}
        \end{equation}

        Finally, setting $\beta$ to be $\beta_1$ or $\beta_2$ in equation \ref{eq:final}, we obtain the equalities:
        \begin{equation}\label{eq_finalimpossibility}
        \begin{split}
            w_{1}\gamma_{1}(s,a)V_{1}(T(s, a), \Pi_{\beta_2}) = w_1\gamma_\Agg(s,a)V_{1}(T(s, a), \Pi_{\beta_2}),\\[2pt]         w_{2}\gamma_2(s,a)V_{2}(T(s, a), \Pi_{\beta_1}) = w_2\gamma_\Agg(s,a)V_{2}(T(s, a), \Pi_{\beta_1}).
        \end{split}
        \end{equation}
        The $w_i$ are non-zero (Axiom \ref{axiom_ud}, Sensitivity) and the remaining values are non-zero (else $\beta_1 = \beta_2$), so we conclude that $\gamma_\Agg(s, a) = \gamma_{1}(s, a) = \gamma_{2}(s, a)$. But this contradicts our assumption that individual preferences $\{\succ_i; i \in \I\}$ may be chosen arbitrarily, completing the proof.
    \end{proof}
\else
    \begin{proof}[Sketch of Proof]
    The full proof is in Appendix \ref{appdx_proof}. Briefly, we consider $|\I| = 2$ and use Axiom 4 (Unrestricted Domain) to construct mixtures of $\Pi$ and $\Omega$ so that each mixture is considered indifferent to $\Lambda$ by one of the individual preference relations. Then, by applying Theorem 2 and Theorem 3 in alternating orders to the difference between the value of the mixture policy and the value of $\Lambda$, and doing some algebra, we arrive at the equations
        \begin{equation}\label{eq_finalimpossibility_main}
            w_{1}\gamma_{1}(s,a) = w_1\gamma_\Agg(s,a)\quad\textrm{and}\quad           w_{2}\gamma_2(s,a) = w_2\gamma_\Agg(s,a),
        \end{equation}
    from which we conclude that $\gamma_\Agg(s, a) = \gamma_{1}(s, a) = \gamma_{2}(s, a)$. But this contradicts our assumption that individual preferences $\{\succ_i; i \in \I\}$ may be chosen arbitrarily, completing the proof.
    \end{proof}
\fi
    
\textbf{Intuition of Proof.} Under mild conditions, we can find two policies (a $\Pi/\Omega$ mixture, and $\Lambda$) between which individual preference $\succ_1$ is indifferent, but individual preference $\succ_2$ is not. Then for mixtures of this $\Pi/\Omega$ mixture and $\Lambda$, $\succ_1$ remains indifferent, but the strength of $\succ_2$ changes, so that $\succ_\Agg$ must have the same time preference as $\succ_2$. By an analogous argument, $\succ_\Agg$ must have the same time preference as $\succ_1$, leading to a violation of Unrestricted Domain. 

The closest results from the economics literature consider consumption streams ($S = \mathbb{R}$) \citep{zuber2011aggregation,chambers2018multiple,weitzman2001gamma}. Within reinforcement learning, equal time preference has been assumed, without justification, when merging MDPs \citep{singh1998dynamically,laroche2017multi} and value functions for different tasks \cite{haarnoja2018composable}.

\paragraph{Remark 3.3.1} A consequence of Unrestricted Domain, critical to the impossibility result, is that individual preferences may have diverse time preferences (i.e., different discount functions). If discounts are equal, Markovian aggregation is possible.

\paragraph{Remark 3.3.2} Per Remark 3.2.2, by applying Harsanyi's theorem, we are implicitly assuming that all preferences are formed using the same ``objective'' probabilities over prospects; is there a notion of ``objective'' time preference that should be used? If so, this would resolve the impossibility (once again, requiring that we impose a notion of ideal preference on individuals that differs from their expressed preference). We leave this consideration for future work.

\paragraph{Remark 3.3.3} Theorem \ref{theorem_impossibility} applies to the standard RL setup, where $\gamma_1$, $\gamma_2$, $\gamma_\Agg$ are constants.

\section{Escaping Impossibility with Non-Markovian Aggregation}\label{section_solution}

An immediate consequence of Theorem \ref{theorem_impossibility} is that any scalarized approach to multi-objective RL \citep{van2013scalarized} is generally insufficient to represent composed preferences. But the implications run deeper: insofar as general tasks consist of several objectives, Theorem \ref{theorem_impossibility} pushes Sutton's reward hypothesis to its limits. To escape impossibility, the Procrastinator's Peril is suggestive: to be productive, repeat play should not be rewarded. And for this to happen, we must keep track of past play, which suggests that \textbf{reward must be non-Markovian, even when all relevant objectives are Markovian}. That is, even if we have settled on some non-exhaustive state representation that is ``sufficiently'' Markovian, an extra aggregation step could render it no longer sufficient.

\paragraph{Relaxing Markov Preference} The way in which non-Markovian rewards (or equivalently, non-Markovian utilities) can be used to escape Theorem \ref{theorem_impossibility} is quite subtle. Nowhere in the proofs of Theorems \ref{theorem_VNM}-\ref{theorem_impossibility} is the Markov assumption explicitly used. Nor does it obviously appear in any of the Axioms. The Markov assumption \textit{is}, however, invoked in two places. First, to establish history-independent comparability between the basic objects of preference---prospects $(s, \Pi)$---and second, to extend that comparison set to include ``prospects'' of the form $(T(s, a), \Pi)$. To achieve initial comparability, \citet{pitis2019rethinking} applied a ``Markov preference'' assumption (preferences over prospects are independent of history) together with an ``original position'' construction that is worth repeating here:
\begin{idea}\label{quote_markov}\small
[I]t is admittedly difficult to express empirical preference over
prospects \dots an agent
only ever chooses between prospects originating in the same
state \dots [Nevertheless,] we imagine a hypothetical state from which an agent chooses between [lotteries] of prospects, denoted by $\L(\P)$. We might think of this choice being made from behind a “veil of ignorance” \citep{rawls2009theory}.%
\end{idea}%
In other words, to allow for comparisons between prospect $(s_1, \Pi)$ and $(s_2, \Pi)$, we prepend some pseudo-state, $s_0$, and compare prospects $(s_0s_1, \Pi)$ and $(s_0s_2, \Pi)$. Markov preference then lets us cut off the history, so that our preferences between $(s_1, \Pi)$ and $(s_2, \Pi)$ are cardinal. 

The impossibility result suggests, however, that aggregate preference is \textit{not} independent of history, so that construction \ref{quote_markov} cannot be applied. Without this construction, there is no reason to require relative differences between $V(s_1, *)$ and $V(s_2, *)$ to be meaningful, or to even think about lotteries/mixtures of the two prospects (as done in Axiom \ref{axiom_VNM}). Letting go of this ability to compare prospects starting in different states means that Theorem \ref{theorem_VNM} is applicable only to sets of prospects with matching initial states, unless we shift our definition of ``prospect'' to include the history; i.e., letting $h$ represent the history, we now compare ``historical prospects'' with form $(h, \Pi)$.

Though this does not directly change the conclusion of Theorem \ref{theorem_bellman}, $T(s, a)$ in $V(T(s, a), \Pi)$ includes a piece of history, $(s, a)$, and can no longer be computed as $\E_{s' \sim T(s, a)}V(s', \Pi)$. Instead, since the agent is not choosing between prospects of form $(s', \Pi)$ but rather (abusing notation) prospects of form $(sas', \Pi)$, the expectation should be computed as $\E_{s' \sim T(s, a)}V(sas', \Pi)$.

The inability to compare prospects starting in different states also changes the conclusion of Theorem \ref{theorem_harsanyi}, which implicitly uses such comparisons to find constant coefficients $w_i$ that apply everywhere in the original $\L(\P)$. Relaxing the application of Harsanyi's theorem to not make inter-state comparisons results in weights $w_i(h)$ that are history dependent when aggregating the historical prospects.

\subsection{Possibility Result}\label{subsection_possibility}

Allowing the use of history dependent coefficients in the aggregation of $V_i(T(s, a), \Pi)$ resolves the impossibility. The following result shows that given some initial state dependent coefficients $w_i(s)$, we can always construct history dependent coefficients $w_i(h)$ that allow for dynamically consistent aggregation satisfying all axioms. In the statement of the theorem, $\L(\P_{h})$ is used to denote the set of lotteries over prospects starting with history $h$ of arbitrary but finite length (not to be confused with the set of lotteries over all historical prospects, of which there is only one). Note that if a preference relation satisfies Axioms 1-2 with respect to $\L(\P)$, the natural extension to $\L(\P_{hs})$, $(hs, \Pi_1) \succ (hs, \Pi_2) \iff (s, \Pi_1) \succ (s, \Pi_2)$, satisfies Axioms 1-2 with respect to $\L(\P_{hs})$. Here, we are using $hs$ to denote a history terminating in state $s$. 

\begin{theorem}[Possibility]\label{theorem_possibility} Consider the aggregation of arbitrary individual preference relations $\{\succ_i; i \in \I\}$ defined on $\L(\P)$, and consequently $\L(\P_h),\ \forall h$, that individually satisfy Axioms 1-2. There exists aggregated preference relations $\{\succ_\Agg^h\}$, defined on $\L(\P_{h}),\ \forall h$, that satisfy Axioms 1-4.\\[6pt]
In particular, \textbf{given} $s, a$, $V_\Agg$, $\{V_i\}$, $\{w_i(s)\}$, \textbf{where (A)} each $V_i$ satisfies Axioms 1-2 on $\L(\P)$, and $V_\Agg$ satisfies Axioms 1-2 on $\L(\P_h), \forall h$, \textbf{and (B)} $V_\Agg(s, a\Pi) = \sum_iw_i(s)V_i(s, a\Pi))$, \textbf{then}, choosing
\begin{equation}\label{eq_weight_update}\small
    w_i(sas') := w_i(sa) := w_i(s)\gamma_i(s,a) \quad\textrm{ for all } i, s'
\end{equation} 
implies that $V_\Agg(sas',\Pi) \propto \sum_iw_i(sa)V_i(s',\Pi)$ so that the aggregated preferences $\{\succ_\Agg^{sas'}\}$ satisfy Axiom 3 on $\L(\P_{sas'})$. Unrolling this result---$w_i(hsas') := w_i(hs)\gamma_i(s,a)$---produces a set of constructive, history dependent weights $\{w_i(h)\}$ such that Axiom 3 is satisfied for all histories $\{h\}$.
\end{theorem}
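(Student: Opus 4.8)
The plan is to prove the constructive ``in particular'' claim as an inductive step and then obtain both the existence statement and the all-histories conclusion by unrolling. I would begin by writing the generalized Bellman representation (Theorem \ref{theorem_bellman}) for each individual on $\L(\P)$ and for the aggregate on $\L(\P_h)$, keeping in mind (per the discussion preceding the theorem) that the aggregate continuation value is an expectation over \emph{historical} prospects, $\E_{s'\sim T(s,a)}V_\Agg(sas',\Pi)$, whereas each individual cuts off history, $V_i(T(s,a),\Pi)=\E_{s'\sim T(s,a)}V_i(s',\Pi)$. Substituting both expansions into premise (B), $V_\Agg(s,a\Pi)=\sum_i w_i(s)V_i(s,a\Pi)$, the individual reward terms collect into a $\Pi$-independent constant and the discounted continuation terms become $\E_{s'}\sum_i w_i(s)\gamma_i(s,a)V_i(s',\Pi)$. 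Recognizing $w_i(s)\gamma_i(s,a)=w_i(sa)=w_i(sas')$ as the prescribed weight update, this is precisely where the individual discounts are absorbed into the weights, so that no single aggregate discount is ever forced; conceptually, this is what sidesteps the impossibility of Theorem \ref{theorem_impossibility}.

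Collecting all $\Pi$-independent reward terms into one constant, the identity reads $\gamma_\Agg(s,a)\,\E_{s'}V_\Agg(sas',\Pi) = \mathrm{const} + \E_{s'}\sum_i w_i(sa)V_i(s',\Pi)$ for every policy $\Pi$. The hard part will be that this is an identity \emph{in expectation over next states}, whereas Axiom \ref{axiom_pareto} on $\L(\P_{sas'})$ fixes $s'$ and ranges over policies (and lotteries thereof), so what is actually needed is the per-state, per-policy affine relation $V_\Agg(sas',\Pi)\propto \sum_i w_i(sa)V_i(s',\Pi)$. To decouple the next states I would exploit that policies are history dependent: fix a baseline $\Pi_0$, pick a target $s'$ with $T(s,a)(s')>0$, and form the policy that follows an arbitrary continuation $\sigma$ on the branch reaching $s'$ and $\Pi_0$ on the other branches; subtracting the two instances of the expectation identity then makes every branch $s''\neq s'$ and the constant cancel, and dividing by the positive probability $T(s,a)(s')$ leaves $\gamma_\Agg(s,a)\,V_\Agg(sas',\sigma) = \sum_i w_i(sa)V_i(s',\sigma) + \mathrm{const}$ for all $\sigma$, i.e. the desired positive affine (hence preference-preserving) relation.

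With the proportionality established, Axiom \ref{axiom_pareto} on $\L(\P_{sas'})$ follows by direct computation (the elementary converse to Theorem \ref{theorem_harsanyi}): since $V_\Agg(sas',\cdot)$ is a positive affine image of $\sum_i w_i(sa)V_i(s',\cdot)$ and all of $V_\Agg,\{V_i\}$ value lotteries by expectation (Theorem \ref{theorem_VNM}), common individual indifference $V_i(\tilde p)=V_i(\tilde q)\ (\forall i)$ forces $V_\Agg(\tilde p)=V_\Agg(\tilde q)$. Because the derived relation at level $sas'$ has exactly the form of premise (B) at level $s$ with $w_i(s)$ replaced by $w_i(sa)$, I can iterate: defining $w_i(hsas'):=w_i(hs)\gamma_i(s,a)$ and inducting on history length, each step reproduces the hypotheses of the inductive lemma at the next level, yielding the history-dependent weights $\{w_i(h)\}$ and Axiom \ref{axiom_pareto} at every history $h$.

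Finally, for the bare existence statement I would build the family $\{\succ_\Agg^h\}$ top down from this recursion: choose any strictly positive initial weights $\{w_i(s)\}$ and set $V_\Agg(hs,\Pi):=\sum_i w_i(hs)V_i(s,\Pi)$ for every history $hs$ ending in state $s$. I then verify the four axioms directly. Axiom \ref{axiom_VNM} holds because a positive weighted sum of expected-utility representations is again one. Axiom \ref{axiom_DC} holds because the construction satisfies the Bellman relation with aggregate reward $R_\Agg(hs,a)=\sum_i w_i(hs)R_i(s,a)$ (non-Markovian, as promised) and aggregate discount $\gamma_\Agg\equiv 1$, so $V_\Agg(hs,a\Pi)>V_\Agg(hs,a\Omega)$ iff the continuation expectations compare the same way. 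Axiom \ref{axiom_pareto} holds by the weighted-sum form. Axiom \ref{axiom_ud} holds since the recipe is defined for arbitrary individual preferences (Unrestricted Domain) and each weight is a nonzero multiple of a product of strictly positive discounts, so all $w_i(h)\neq 0$ (Sensitivity). I expect the next-state decoupling of the second paragraph to be the only genuinely delicate step; everything else is bookkeeping around Theorems \ref{theorem_VNM}--\ref{theorem_harsanyi}.
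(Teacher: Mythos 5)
Your proof is correct, but it takes a genuinely different route from the paper's. The paper proves the ``in particular'' clause by re-running the impossibility machinery: it uses Unrestricted Domain to construct conflicting individual preferences with the two indifference mixtures $\Pi_{\beta_1},\Pi_{\beta_2}$, applies Theorem \ref{theorem_harsanyi} then Theorem \ref{theorem_bellman} (and vice versa) to the difference $V_\Agg(s,a\Pi_\beta)-V_\Agg(s,a\Lambda)$, and extracts the relations $w_i(s)\gamma_i(s,a)=w_i(sa)\gamma_\Agg(s,a)$; this is a \emph{necessity} argument, and its payoff is that the next-step weights are shown to be \emph{unique} up to a common scaling, i.e.\ the update rule is forced, not merely available. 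You instead argue \emph{sufficiency} directly: you expand both sides of premise (B) via Theorem \ref{theorem_bellman}, absorb the individual discounts into the weights, and then decouple the next states with a branch-separation argument (exploiting history-dependent policies that deviate only on the branch reaching a fixed $s'$ with $T(s,a)(s')>0$) to obtain the per-state affine relation $\gamma_\Agg(s,a)V_\Agg(sas',\sigma)=\sum_i w_i(sa)V_i(s',\sigma)+\mathrm{const}$, from which Axiom \ref{axiom_pareto} follows by the elementary converse of Harsanyi. This matches what the ``in particular'' clause literally asserts (a given choice of weights \emph{implies} the proportionality), and your fourth paragraph supplies an explicit global construction with $\gamma_\Agg\equiv 1$ verifying all four axioms, which the paper leaves somewhat implicit; what you give up is the uniqueness-up-to-scaling of the weights, which the theorem statement does not require but which the paper's Remark 4.1 leans on. Two small points to tidy up: your branch-separation step only reaches states $s'$ in the support of $T(s,a)$, so for unrealizable $s'$ you should note that $\succ_\Agg^{sas'}$ can simply be \emph{defined} by the weighted sum (your existence construction already does this); and your derived relation is affine rather than strictly proportional, so you should invoke Theorem \ref{theorem_VNM} to absorb the additive constant, as the paper implicitly does with its $\propto$.
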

\begin{proof}[Sketch of Proof]
  The full proof is in Appendix \ref{appdx_proof}. Following the proof of Theorem \ref{theorem_impossibility}, we arrive at
    \begin{equation}\label{eq_finalpossibilityx}\small
        w_{1}(s)\gamma_{1}(s,a) = w_1(sa)\gamma_\Agg(s,a)\quad\textrm{and}\quad           w_{2}(s)\gamma_2(s,a) = w_2(sa)\gamma_\Agg(s,a),
    \end{equation}
    from which we conclude that:
    \begin{equation}\small
        \frac{w_2(sa)}{w_1(sa)} = \frac{w_2(s)\gamma_2(s,a)}{w_1(s)\gamma_1(s,a)}.
    \end{equation}
    This shows the existence of weights $w_i(sa)$, unique up to a constant scaling factor, for which $V_\Agg(T(s,a),\Pi) \propto \sum_iw_i(sa)V_i(T(s,a),\Pi)$, that apply regardless of how individual preferences are chosen or aggregated at $s$. Unrolling the result completes the proof.
\end{proof}

From Theorem \ref{theorem_possibility} we obtain a rather elegant result: rational aggregation over time discounts the aggregation weights assigned to each individual value function proportionally to its respective discount factor. In the Procrastinator's Peril, for instance, where we started with $w_\playx(s) = w_\workx(s) = 1$, at the initial (and only) state $s$, we might define $w_\playx(s\playx s) = 0.5$ and  $w_\workx(s\playx s) = 0.9$. With these non-Markovian aggregation weights and $\gamma_\Agg(s\playx) = 1$, you can verify that (1) the irrational procrastination behavior is solved, and (2) the aggregated rewards for $\work$ and $\play$ are now non-Markovian. 

\paragraph{Remark 4.1 (Important!)} The discount $\gamma_\Agg$ is left undetermined by Theorem \ref{theorem_possibility}. One might determine it several ways: by appealing to construction \ref{quote_markov} with respect to historical prospects in order to establish inter-state comparability, by setting it to be the highest individual discount (0.9 in the Procrastinator's Peril), by normalizing the aggregation to weights to sum to 1 at each step, or perhaps by another method. In any case, determining $\gamma_\Agg$ would also determine the aggregation weights, per equation \ref{eq_finalpossibilityx} (and vice versa). We leave the consideration of different methods for setting $\gamma_\Agg$ and establishing inter-state comparability of $V_\Agg$ to future work. (NB: \textit{This is a normative question}, which we leave unanswered. While one can make assumptions, as we will for our numerical example in Subsection \ref{subsection_historical_disc}, future research should be wary of accepting a solution just because it seems to work.)

\subsection{A Practical State Space Expansion}\label{subsection_expansion}

The basic approach to dealing with non-Markovian rewards is to expand the state space in such a way that rewards becomes Markovian \citep{gaon2020reinforcement,camacho2019ltl, abel22}. 
However, naively expanding $\S$ to a history of length $H$ could have $O((|\S|+|\A|)^H)$ complexity. 
Fortunately, the weight update in equation \ref{eq_finalpossibilityx} allows us to expand the state using a single parameter per objective. In particular, for history $hsa$ and objective $i$, we append to the state the factors $y_i(hsa) := y_i(h)\gamma_i(s, a)/\gamma_\Agg(s, a)$, which are defined for every history, and can be accumulated online while executing a trajectory. Then, given a composition with any set of initial weights $\{w_i\}$, we can compute the weights of augmented state $s_{\textrm{aug}} = (s, y_i(hs))$ as $w_i(s_{\textrm{aug}}) = y_i(hs)\cdot w_i$. Letting $\L(\P^{(y)})$ be the set of prospects on the augmented state set, we get the following corollary to Theorem \ref{theorem_possibility}:

\begin{corollary}\label{corollary_possibility} Consider the aggregation of arbitrary individual preference relations $\{\succ_i; i \in \I\}$ defined on $\L(\P)$, and consequently $\L(\P^{(y)})$, that individually satisfy Axioms 1-2. There exists aggregated preference relation $\{\succ_\Agg\}$, defined on $\L(\P^{(y)})$, that satisfies Axioms 1-4.
\end{corollary}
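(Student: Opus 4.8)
The plan is to read the corollary as the ``Markovianization'' of Theorem~\ref{theorem_possibility}: the history-dependent weights constructed there depend on the past only through the accumulated factors $y_i$, so appending those factors to the state collapses the family $\{\succ_\Agg^h\}$ into a single relation that is Markovian with respect to the augmented state. First I would fix the augmented MDP: states $(s,\mathbf{y})$ with $\mathbf{y}=(y_i)_{i\in\I}$, $y_i>0$, and dynamics that combine the original $s'\sim T(s,a)$ with the deterministic bookkeeping update $y_i \mapsto y_i\,\gamma_i(s,a)/\gamma_\Agg(s,a)$ (initialized at $y_i=1$), exactly the online accumulation defined before the statement. Each individual $V_i$ extends to $\L(\P^{(y)})$ by ignoring the coordinate $\mathbf{y}$, hence remains a VNM representation on every augmented state, and I define the aggregate representation directly as $V_\Agg\big((s,\mathbf{y}),\Pi\big) := \sum_{i\in\I} y_i\,w_i\,V_i(s,\Pi)$, i.e.\ using the state weights $w_i(s_{\mathrm{aug}})=y_iw_i$ prescribed by Theorem~\ref{theorem_possibility}.

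Next I would verify Axioms~1--2 for the relation $\succ_\Agg$ induced by $V_\Agg$. Axiom~1 is immediate from Theorem~\ref{theorem_VNM} once $V_\Agg$ is extended to lotteries by expectation, because within any fixed augmented state the coefficients $y_iw_i$ are constants, so $V_\Agg$ is a genuine real-valued function on prospects. For Axiom~2 I would substitute the individual Bellman identities $V_i(s,a\Pi)=R_i(s,a)+\gamma_i(s,a)V_i(T(s,a),\Pi)$ and use the bookkeeping update to telescope the continuation: since $y_i\gamma_i(s,a)=\gamma_\Agg(s,a)\,y_i'$ with $y_i'=y_i\gamma_i(s,a)/\gamma_\Agg(s,a)$,
\begin{equation*}
V_\Agg\big((s,\mathbf{y}),a\Pi\big) = \sum_{i} y_iw_iR_i(s,a) + \gamma_\Agg(s,a)\,\E_{s'\sim T(s,a)}\,V_\Agg\big((s',\mathbf{y}'),\Pi\big),
\end{equation*}
a generalized Bellman representation (as in Theorem~\ref{theorem_bellman}) with aggregate reward $R_\Agg((s,\mathbf{y}),a)=\sum_i y_iw_iR_i(s,a)$ and \emph{positive} discount $\gamma_\Agg(s,a)$, where $\mathbf{y}'=(y_i')_{i\in\I}$. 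Positivity gives dynamic consistency directly: differencing the recursion for $a\Pi$ and $a\Omega$ cancels the reward and preserves sign, so $\succ_\Agg$ satisfies Axiom~2.

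The crux is Axiom~3, and the key observation is that the coordinate restoring Markovianity also equalizes the effective time preferences. Writing $\widehat{V}_i := y_iV_i$, the displayed recursion shows that every $\widehat{V}_i$---and therefore $V_\Agg=\sum_i w_i\widehat{V}_i$---discounts at the \emph{common} rate $\gamma_\Agg$ on the augmented space. Since $y_i>0$, $\widehat{V}_i$ is a positive rescaling of $V_i$ within each augmented state and thus a legitimate VNM representation of $\succ_i$ there, while $V_\Agg$ is now a \emph{constant}-weight sum of these representations. Pareto indifference then follows from the converse of Harsanyi's theorem (Theorem~\ref{theorem_harsanyi}): if $\widehat{V}_i(\tilde p)=\widehat{V}_i(\tilde q)$ for all $i$, then $V_\Agg(\tilde p)=V_\Agg(\tilde q)$. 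This is exactly the ``equal discount'' regime of Remark~3.3.1 under which Markovian aggregation is permitted; manufacturing that regime is the entire purpose of the state expansion. Axiom~4 is then routine: Unrestricted Domain holds because $\{y_i\}$ and $V_\Agg$ are defined for every admissible family $\{\succ_i\}$, and Sensitivity holds because the $w_i$ (hence the nonzero coefficients $y_iw_i$) are nonzero, so perturbing any single $\succ_i$ moves $\succ_\Agg$.

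I expect the main obstacle to be stating and justifying Axiom~3 cleanly, since the individual preferences on $\L(\P^{(y)})$ are pinned down only for within-augmented-state comparisons---cross-state comparability of $V_\Agg$ is deliberately left open (cf.\ Remark~4.1). The care required is to aggregate the $\widehat{V}_i=y_iV_i$ representations, which agree with $V_i$ exactly where indifference is meaningful (within a state), rather than the raw $V_i$; only then are the weights constant and Harsanyi applicable. Using $V_i$ directly would make the aggregation coefficients $y_iw_i$ vary with the state coordinate $\mathbf{y}$, reintroducing the $\gamma_i\neq\gamma_\Agg$ mismatch responsible for Theorem~\ref{theorem_impossibility} and breaking Pareto indifference. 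Everything else---the Bellman telescoping and the remaining axiom checks---is bookkeeping that mirrors the algebra of Theorem~\ref{theorem_possibility}.
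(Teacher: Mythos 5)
Your proposal is correct and follows essentially the same route the paper intends: the corollary is obtained by observing that the history-dependent weights of Theorem~\ref{theorem_possibility} depend on the past only through the accumulated factors $y_i(h)$, so appending them to the state with weights $w_i(s_{\mathrm{aug}})=y_i w_i$ yields a Markovian aggregate whose Bellman form has the common discount $\gamma_\Agg$. Your explicit verification of Axioms 1--4 (and your caveat that Pareto indifference is only meaningful for within-augmented-state comparisons, per Remark~4.1) fills in details the paper leaves implicit but does not change the argument.
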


\section{Discussion and Related Work}\label{section_discussion}

\subsection{A Fundamental Tension in Intertemporal Choice}\label{subsection_tension}

The state space expansion of Subsection \ref{subsection_expansion} allows us to {represent} the (now Markovian) values of originally non-Markovian policies in a dynamically consistent way. While this allows us to design agents that implement these policies, it doesn't quite solve the intertemporal choice problem. 

In particular, it is known that dynamic consistency (in the form of Koopmans' Stationarity \cite{koopmans1960stationary}), together with certain mild axioms, implies a first period dictatorship: the preferences at time $t=1$ are decisive for all time \cite{ferejohn1978foundations} (in a sense, this is the very definition of dynamic consistency!). Generally speaking, however, preferences at time $t \neq 1$ are not the same as preferences at time $t = 1$ (this is what got us into our Procrastinator's Peril to begin with!) and we would like to care about the value at all time steps, not just the first. 

A typical approach is to treat each time step as a different generation (decision maker), and then consider different methods of aggregating preferences between the generations \cite{heal2005intertemporal}. 
Note that (1) this aggregation assumes intergenerational comparability of utilities (see Remark 4.1), and (2) each generation is expressing their personal preferences about what happens in \textit{all} generations, not just their own. 
Since this is a single aggregation, it will be dynamically consistent (we can consider the first period dictator as a benevolent third party who represents the aggregate preference instead of their own). 
A sensible approach might be to assert a stronger version of Axiom \ref{axiom_pareto} that uses preference:

\textbf{Axiom 3$'$} (Strong Pareto Preference)\label{axiom_pareto2}
	If $\tilde p \succeq_i \tilde q\ (\forall i \in \I)$, then $\tilde p \succeq_\Agg \tilde q$; and if, furthermore, $\exists j \in \I$ such that $\tilde p \succ_j \tilde q$, then $\tilde p \succ_\Agg \tilde q$.

Using Axiom 3$'$ in place of Axiom \ref{axiom_pareto} for purposes of Theorem \ref{theorem_harsanyi} gives a representation that assigns strictly positive weights to each generation's utility. Given infinite periods, it follows (e.g., by the Borel-Cantelli Lemma for general measure spaces) that if utilities are bounded, some finite prefix of the infinite stream decides the future and we have a ``finite prefix dictatorship'', which is not much better than a first period one.

The above discussion presents a strong case \textit{against} using dynamic consistency to determine a long horizon policy. Intuitively, this makes sense: preferences change over time, and our current generation should not be stuck implementing the preferences of our ancestors. One way to do this is by taking time out of the equation, and optimizing the expected individual utility of the stationary state-action distribution, $d_\pi$ (cf. \citet{sutton1998reinforcement} (Section 10.4) and \citet{naik2019discounted}):
\begin{equation}\label{eq_avg_reward}
    J(\pi) = \sum\nolimits_{s}d_\pi(s)V_\pi(s)
\end{equation}
Unfortunately, as should be clear from the use of a stationary policy $\pi$, this time-neutral approach falls short for our purposes, which suggests the use of a non-Markovian policy. While optimizing equation $\ref{eq_avg_reward}$ would find the optimal stationary policy in the Procrastinator's Peril (\work forever with $V(\tau_2) = 3.0$), it seems clear that we \textit{should} $\play$ at least once ($V(\tau_3) = 3.2$) as this hurts no one and makes the current decision maker better off---i.e., simply optimizing $(\ref{eq_avg_reward})$ violates the Pareto principle. 

This discussion exemplifies a known tension in intertemporal choice between Pareto optimality and the requirement to treat every generation equally---it is impossible to have both \cite{chichilnisky1996axiomatic,lauwers1998intertemporal}. In a similar spirit to \citet{chichilnisky1996axiomatic}, who has proposed an axiomatic approach requiring both finite prefixes of generations and infinite averages of generations to have a say in the social preference, we will examine a compromise (to the author's knowledge novel) between present and future decision makers in next Subsection.

\subsection{N-Step Commitment and Historical Discounting}\label{subsection_historical_disc}

We now consider two solutions to the intertemporal choice problem that deviate just enough from dynamic consistency to overcome finite period dictatorships, while capturing almost all value for each decision maker. In other words, they are ``almost'' dynamically consistent. We leverage the following observation: due to discounting, the first step decision maker cares very little about the far off future. If we \play once, then \work for 30 steps in the Procrastinator's Peril, we are already better off than the best stationary policy, regardless of what happens afterward. 

This suggests that, rather than following the current decision maker forever, we allow them commit to a non-Markovian policy for some $N < \infty$ steps that brings them within some $\epsilon$ of their optimal policy, without letting them exert complete control over the far future. To implement this, we could reset the accumulated factors $y_i$ to $1$ every $N$ steps. This approach is the same as grouping every consecutive $N$ step window into a single, dynamically consistent generation with a first period dictator. 

A more fluid and arguably better approach is to discount the past when making current decisions (``historical discounting''). We can implement historical discounting with factor $\eta \in [0, 1]$ by changing the update rule for factors $y_i$ to 
\begin{equation*}
    y_i(hsa) := \eta\left[y_i(h)\frac{\gamma_i(s, a)}{\gamma_\Agg(s, a)}\right] + (1-\eta)w^{n}_i(s),
\end{equation*} where $w^{n}_i(s)$ denotes the initial weight for objective $i$ at the $n$th generation (if preferences do not change over time, $w^{n}_i = w_i$). This performs an exponential moving average of preferences over time---high $\eta<1$ initially gives the first planner full control, but eventually discounts their preferences until they are negligible. This obtains a qualitatively different effect from $N$-step commitment, since the preferences of all past time steps have positive weight (by contrast, on the $N$th step, $N$-step commitment gives no weight to the $N-1$ past preferences). As a result, in the Procrastinator's Peril, any sufficiently high $\eta$ returns policy $\tau_3$, whereas $N$-step commitment would $\play$ every $N$ steps.

\newcommand{\playn}{\texttt{playn}
}
Historical discounting is an attractive compromise because it is both Pareto efficient, and also anonymous with respect to tail preferences---it both optimizes equation $\ref{eq_avg_reward}$ in the limit, \textit{and} plays on the first step. Another attractive quality is that historical discounting allows changes in preference to slowly deviate from dynamic consistency over time---the first period is \textit{not} a dictator. 

As a numerical example, we can consider what happens in the Procrastinator's Peril if we start to realize that we value occasional play, and preferences shift away from the $\play$ MDP to a $\playn$ MDP. The $\playn$ MDP has fixed $\gamma_\playn = 0.9$ and non-Markovian reward $R(\playx \given \textrm{no play in last N-1 steps}) = 0.5, \ R(\textrm{anything else}) = 0$. We assume preferences shift linearly over the first 10 timesteps, from $w_\play = 1, w_\playn = 0$ at $t=0$ to $w_\play = 0, w_\playn = 1$ at $t = 10$ (and $w_\work = 1$ throughout). Then, the optimal trajectories $\tau^*$ for different $\eta$, and resulting discounted reward for the \textit{first} time step, $V^1(\tau^*)$, are as follows:

\begin{table}[h]\centering
\begin{tabular}{l@{\hspace{0.8cm}}l@{\hspace{0.9cm}}l}
	\toprule
        $\eta$ & $\tau^*$ & $V^1(\tau^*)$\\
        \midrule
	$0.00$ & $\play$ for 5 steps, then $\play$ every 10 steps & $2.635$  \\
	$0.30$ & $\play$ for 5 steps, then $\play$ every 10 steps & $2.635$  \\
	$0.50$ & $\play$ for 3 steps, then $\play$ every 10 steps & $2.932$  \\
	$0.90$ & $\play$, then $\work$ for 14 steps, then $\play$ every 10 steps & $3.105$  \\
	$0.95$ & $\play$, then $\work$ for 23 steps, then $\play$ every 10 steps & $3.163$  \\
	$0.98$ & $\play$, then $\work$ for 50 steps, then $\play$ every 10 steps & $3.198$  \\
	$1.00$ & $\play$, then $\work$ forever (same as $\tau_3$) & $3.200$  \\
	\bottomrule
\end{tabular}
\end{table}

When $\eta=0$, each time step acts independently, and since the $\play$ MDP has high weight at the start, we experience a brief period of consistent $\play$ in line with the original Procrastinator's Peril, before preferences fully shift to occasional play. With high $\eta<1$, we capture almost all value for the first time step, while also eventually transitioning to the equilibrium ``\play every 10 steps'' policy. 

\renewcommand{\paragraph}[1]{\textbf{#1}\ }
\subsection{Extension to Boltzmann Policies}\label{subsection_boltzmann}

The preference relation $\succ$ is deterministic, but the associated policy does not have to be. In many cases---partially-observed, multi-agent, or even fully-observed settings \cite{sutton1998reinforcement,haarnoja2018soft}---stochastic policies outperform deterministic ones. And for generative sequence models such as LLMs \cite{brown2020language}, stochastic policies are inherent. We can extend our analysis---impossibility, possibility, state space expansion, and intertemporal choice rules---to such cases by adopting a stochastic choice rule.

To formalize this, we will take the stochastic choice rule as primitive, and use it to define a (deterministic) relation $\succ$ that, for practical purposes, satisfies Axioms 1-4 \cite{debreu1958stochastic}. We assume our choice rule satisfies Luce's Choice Axiom \cite{luce1959individual}, which says that the relative rate of choosing between two alternatives in a choice set of $n$ alternatives is constant regardless of the choice set. This can be implemented numerically with a Bradley-Terry choice model \cite{bradley1952rank} by associating each alternative $a_i$ with a scalar $\Omega(a_i)$, so that given choice set $\{a_i, a_j\}$, $p(a_i) = \Omega(a_i)/(\Omega(a_i)+\Omega(a_j))$. ({An interesting interpretation for $\Omega(a_i)$ that connects to both probability matching \cite{vulkan2000economist,shanks2002re} and statistical mechanics \cite{mandl1991statistical} is as the number of ``outcomes'' (microstates) for which $a_i$ is the best choice.)

We then simply ``define'' preference as $a_i \succ a_j \iff \Omega(a_i) > \Omega(a_j)$, and utility as $V(a_i) := k\log\Omega(a_i)$, so that the policy is a softmax of the utilities. The way these utilities are used in practice (e.g., \cite{haarnoja2018soft,christiano2017deep}) respects Axioms \ref{axiom_VNM}-\ref{axiom_DC}. And summing utilities is a common approach to composition \cite{du2020compositional,haarnoja2018composable}, which is consistent with Harsanyi's representation (Theorem \ref{theorem_harsanyi}). For practical purposes then, the impossibility result applies whenever composed objectives may have different time preference.

\paragraph{Remark 5.3} Unlike our main results, this extension to Boltzmann policies is motivated by practical, rather than normative, considerations. Simple counterexamples to Luce's Choice Axiom exist \cite{yellott1977relationship} and probability matching behavior is evidently irrational in certain circumstances \cite{shanks2002re}. We note, however, that certain theoretical works tease at the existence of a normative justification for Boltzmann policies \cite{debreu1958stochastic,batiz1979decisions,suppes1961behavioristic,eysenbachmaximum}; given the practice, a clear justification would be of great value.

\subsection{Related Work in RL}\label{section_related_Work}

\paragraph{Task Definition} Tasks are usually defined as the maximization of expected cumulative reward in an MDP \citep{sutton1998reinforcement,puterman2014markov,silver2021reward}. Preference-based RL \citep{wirth2017survey} avoids rewards, operating directly with preferences (but note that preference aggregation invokes Arrow's impossibility theorem \cite{arrow2010handbook,muandet2022impossibility}), while other works translate preferences into rewards  \cite{christiano2017deep,stiennon2020learning,brown2019extrapolating,jeon2020reward}. This paper joins a growing list of work \cite{pitis2019rethinking,abel2021expressivity,szepesvari2020constrained,vamplew2022scalar} that challenges the implicit assumption that ``MDPs are enough'' in many reward learning papers, particularly those that     disaggregate trajectory returns into stepwise rewards \cite{efroni2021reinforcement,raposo2021synthetic,renlearning}. 

\paragraph{Discounting} Time preference or discounting can be understood as part of the RL task definition. Traditionally, a constant $\gamma$ has been used, although several works have considered other approaches, such as state-action dependent discounting \cite{white2017UnifyingTS, silver2017ThePE} and non-Markovian discounting \cite{fedus2019hyperbolic,schultheis2022reinforcement}. Several works have considered discounting as a tool for optimization \cite{van2019using} or regularization \cite{jiang2015dependence,amit2020discount,rathnam2023unintended}.
    
\paragraph{Task Compositionality} Multi-objective RL \citep{roijers2013survey} represents or optimizes over multiple objectives or general value functions \cite{sutton2011horde}, which are often aggregated with a linear scalarization function \cite{barreto2017successor,abels2018dynamic}. Rather than scalarizing to obtain a single solution to a multi-objective problem, one can also seek out sets of solutions, such as the set of Pareto optimal policies \cite{van2014multi} or the set of acceptable policies \cite{miura2023expressivity}. Several works have also considered formal task decompositions \cite{camacho2019ltl,nangue2020boolean} where simple addition of MDPs is insufficient \cite{singh1998dynamically}. More broadly, in machine learning, composition can be done via mixtures of experts and/or energy-based modeling \cite{jacobs1991adaptive,du2020compositional}, which have also been applied to RL \cite{haarnoja2018composable,laroche2017multi}. Our results provide normative justification for linear scalarization when time preference is the same for all objectives, but call for non-Markovian adjustments when time preferences differ.

\paragraph{Non-Markovian Rewards} The necessity of non-Markovian rewards was demonstrated in other settings by \citet{abel2021expressivity} and more recently, in a concurrent work by \citet{skalse2023limitations}. Though several papers explicitly consider RL with non-Markovian rewards \citep{gaon2020reinforcement,rens2020online}, this is usually motivated by task compression rather than necessity, and the majority of the RL literature restricts itself to Markovian models.
Many popular exploration strategies implicitly use non-Markovian rewards \citep{kolter2009near,pathak2017curiosity}. Our work is unique in that non-Markovian rewards arise from aggregating strictly Markovian quantities, rather non-Markovian quantities present in the task definition or algorithm. 

\section{Conclusion and Future Work} \label{section_conclusion}

The main contribution of this work is an impossibility result from which one concludes that non-Markovian rewards (or an equivalent state expansion) are likely necessary for agents that pursue multiple objectives or serve multiple principals. 
It's possible that this will be the case for any advanced agent whose actions impact multiple human stakeholders.
To accurately align such agents with diverse human preferences we need to endow them with the capacity to solve problems requiring non-Markovian reward, for which this paper has proposed an efficient state space expansion that uses one new parameter per aggregated objective. 
While the proposed state space expansion allows multi-objective agents to have dynamically consistent preferences for future prospects, it does not, in itself, solve the intertemporal choice problem. To that end, we have proposed ``historical discounting'', a novel compromise between dynamic consistency and fair consideration of future generations. 

Interesting avenues for future work include quantifying the inefficiency of Markovian representations, investigating normative approaches to aggregating preferences based on subjective world models (Remark 3.2.2), considering the existence of an ``objective'' time preference (Remark 3.3.2), improving methods for determining subjective time preference (e.g., \cite{schultheis2022reinforcement}), implementing and comparing approaches to determining $\gamma_\Agg$ (Remark 4.1), investigating historical discounting in more detail (Subsection \ref{subsection_historical_disc}), and considering the existence of a normative justification for Boltzmann policies and their composition (Subsection \ref{subsection_boltzmann}).

\begin{ack}
I thank Elliot Creager, for a fruitful discussion that prompted Subsection \ref{subsection_tension} and motivated me to turn this into a full paper; Duncan Bailey, who assisted with an earlier workshop version; the anonymous reviewers, who provided detailed reviews and suggestions that helped improve the final manuscript; and Jimmy Ba and the Ba group for early discussions. This work was supported by an NSERC CGS D Award and a Vector Research Grant. 
\end{ack}

\vfill\newpage

{\small
\bibliography{refs}
\bibliographystyle{plainnat}
}

\vfill \newpage

\appendix
\section{Notation Glossary}\label{appdx_notation}

\begin{table}[h]\small\renewcommand{\arraystretch}{1.2}

        \caption{{Notation summary.}}\label{table_notation}
	\begin{tabularx}{\columnwidth}{p{0.2\columnwidth}X}
	\toprule
	\ \\[-6pt]
	\multicolumn{2}{l}{\textbf{Basic notation}}\\[4pt]
	  $\L(\cdot)$ & the set of probability distributions over its argument\\
        $\{\cdot\}$ & a set of its argument\\
	  $\M$ & a generalized MDP $(\S, \A, \T, \T_0, R, \gamma)$\\
	  $\S, s$ & the state space $\S$ with generic state $s \in \S$\\
	  $\A, a$ & the action space $\A$ with generic action $a \in \A$\\
	  $\T$ & the transition function $\T: \S \times \A \to \L(\S)$\\
	  $R$ & the reward function $R: \S \times \A \to \R$\\

	  $\gamma$ & the generalized discount factor $\gamma: \S \times \A \to \R^+$\\
	  $\tau$ & a trajectory $\tau = [(s_0, a_0), (s_{1}, a_{1}), \dots]$, which may or may not terminate\\
        $h$ & a history (trajectory to up to time $t$)\\

	  $\pi, \omega$ & stationary policies $\pi, \omega : \S \to \L(\A)$\\
	  $\Pi, \Omega, \Lambda$ & non-stationary policies $(\pi_t \given \tau_0^t, \pi_{t+1}\given \tau_0^{t+1}, \dots)$\\
	  $\bm{\Pi}$ & policy space, so $\pi, \omega, \Pi, \Omega \in \bm{\Pi}$; note $\L(\bm{\Pi}) = \bm{\Pi}$\\
   	  $\P$ &  the space $\P = \S \times \bm{\Pi}$ of prospects over which preferences are expressed; note $\L(P) = \L(S) \times \bm{\Pi}$\\
        $\P_h$ & the set of historical prospects starting with $h$; NB: $(h_2, \Pi) \not\in \P_{h_1}$ for $h_1 \not= h_2$. \\
        $\P^{(y)}$ & the set of prospects on augmented state space $\S \cup \{y_i\}$\\
        $(s, \Pi) \in \P$ & a prospect (stochastic process representing the controlled future)\\
        $(h, \Pi)$ & a historical prospect (state space has been expanded to include $h$)\\
	  $hs$, $sa$, $s\Pi$, $a\Pi$, $\dots$ & concatenations of subcomponents of trajectories/histories/policies; for example, the non-stationary policy $s\Pi := (a, \Pi_{0}, \Pi_{1}, \dots)$\\
	  $V$ & value function $V: \L(\S) \times \bm{\Pi} \to \R$, which is equal to the discounted sum of future rewards:  $V(\tilde s, \Pi) = \E_{s_0 \sim \tilde s, \Pi}\big\lbrace\sum_{t=0}^\infty \big[\prod_{k=1}^t \gamma(s_{t-1}, a_{t-1})\big] r(s_t, a_t)\big\rbrace$\\
	  $\supp(\tilde p)$ & the support of distribution $\tilde p$\\
	  $\I$ & index set $\I$ of individuals (assumed finite)\\
	  $w_i, w_i(s), w_i(h)$ & the $i$th aggregation weight, either a constant, or a function of state or history\\
	  $y_i(h)$ & the $i$th aggregation factor for history $h$\\
        $d_\pi$ & the stationary state-action distribution for policy $\pi$\\
	  
   	\ \\[-3pt]
   	\multicolumn{2}{l}{\textbf{Generic modifiers}}\\[4pt]
		$\,\cdot\,_i$ & index $i$ of a sequence, vector, or collection of agents\\
		$\,\cdot\,_i^j$ & the slice from $i$ to $j$ of a sequence or vector; e.g., 		  $\tau_{t}^{t+k}$ is the trajectory slice $[(s_t, a_t), \dots, (s_{t+k}, a_{t+k})]$\\
		$\,\cdot\,_\Agg$ & indicates an aggregated item (e.g., $\succ_\Agg$ is social/aggregated preference) \\
		$\,\cdot\,'$ & indicates next timestep when time implicit (e.g., $s, s'$) \\
		$\,\tilde\cdot\,$ & indicates a probability distribution (e.g., $\tilde s \in \L(\S)$)\\
		$\succ^h$ & indicates a preference relation on $\P_h$\\
			  
   	\ \\[-3pt]
   	\multicolumn{2}{l}{\textbf{Operators}}\\[4pt]
	$:= (=:)$ & defined as (is the definition of)\\
        $\succ$ & strict preference\\
        $\succeq$ & weak preference ($p \succeq q \leftrightarrow q \not\succ p$)\\
        $\approx$ \ indifference ($p \not\succ q$ and $q \not\succ p$)\\
 
   	\ \\[-6pt]
	\bottomrule
	 \end{tabularx}
\end{table}

\clearpage\newpage

\section{Proofs}\label{appdx_proof}
\setcounter{theorem}{3}

\begin{theorem}[Impossibility]
Assume there exist distinct policies, $\Pi,\Omega,\Lambda$, none of which is a mixture (i.e., convex combination) of the other two, and consider the aggregation of arbitrary individual preference relations $\{\succ_i; i \in \I\}$ defined on $\L(\P)$ that individually satisfy Axioms 1-2. There does not exist aggregated preference relation $\succ_\Agg$ satisfying Axioms 1-4.
\end{theorem}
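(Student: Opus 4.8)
The plan is to argue by contradiction: I suppose an aggregate relation $\succ_\Agg$ satisfying Axioms 1--4 exists, and show that the aggregate discount $\gamma_\Agg(s,a)$ is then forced to coincide with \emph{every} individual discount $\gamma_i(s,a)$ at a fixed state-action pair. Since Unrestricted Domain lets me pick individual preferences whose discounts disagree, this is the contradiction. To set up, I fix an arbitrary $(s,a)$, invoke Theorem 2 to obtain Bellman representations $(R_i,\gamma_i)$ and $(R_\Agg,\gamma_\Agg)$ for the individual and aggregate value functions, and invoke Theorem 3 to write $V_\Agg = c + \sum_i w_i V_i$ on all of $\L(\P)$, where the weights $w_i$ are constant (context-free) and, by Sensitivity (Axiom 4), nonzero. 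Without loss of generality I take $|\I| = 2$.

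The crux is to evaluate $V_\Agg(s, a\Pi_\beta) - V_\Agg(s, a\Lambda)$, where $\Pi_\beta := \beta\Pi + (1-\beta)\Omega$ is a mixture policy, in two different orders. Computing Harsanyi-then-Bellman expands the difference as $\sum_i w_i\gamma_i(s,a)\bigl[V_i(T(s,a),\Pi_\beta) - V_i(T(s,a),\Lambda)\bigr]$, while computing Bellman-then-Harsanyi yields $\sum_i w_i \gamma_\Agg(s,a)\bigl[V_i(T(s,a),\Pi_\beta) - V_i(T(s,a),\Lambda)\bigr]$. In both expansions the reward terms appear identically in the $\Pi_\beta$ and $\Lambda$ pieces and cancel in the difference. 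Equating the two expressions gives, for every $\beta$, an identity relating $\gamma_i$ and $\gamma_\Agg$ weighted by $w_i$ and by the individual values of the mixture relative to $\Lambda$.

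The remaining work is to choose individual preferences, via Unrestricted Domain, so that this identity separates the two objectives. I impose conflicting preferences $\Pi \succ_1 \Lambda \succ_1 \Omega$ and $\Omega \succ_2 \Lambda \succ_2 \Pi$, then use the affine freedom from Theorem 1 to normalize each $V_i$ so that $V_i(T(s,a),\Lambda) = 0$ and so that there are distinct $\beta_1 \neq \beta_2 \in (0,1)$ with $V_1(T(s,a),\Pi_{\beta_1}) = 0$ and $V_2(T(s,a),\Pi_{\beta_2}) = 0$ (possible because mixture values are affine in $\beta$ under the VNM representation, and because $\Pi,\Omega,\Lambda$ are pairwise non-mixtures). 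Substituting $\beta = \beta_2$ annihilates the $i=2$ term on both sides, leaving $w_1\gamma_1(s,a)V_1(T(s,a),\Pi_{\beta_2}) = w_1\gamma_\Agg(s,a)V_1(T(s,a),\Pi_{\beta_2})$; since $w_1 \neq 0$ and $V_1(T(s,a),\Pi_{\beta_2}) \neq 0$ (otherwise $\beta_1 = \beta_2$), I conclude $\gamma_1(s,a) = \gamma_\Agg(s,a)$. The symmetric substitution $\beta = \beta_1$ gives $\gamma_2(s,a) = \gamma_\Agg(s,a)$, whence $\gamma_1 = \gamma_2$, contradicting the free choice of individual discounts.

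The main obstacle I anticipate is justifying the two-order computation cleanly: verifying that Theorem 3's context-free weights $w_i$ are the same whether applied at the prospects $(s, a\Pi_\beta)$ or at the downstream prospects $(T(s,a),\Pi_\beta) \in \L(\P)$, and that $\Pi_\beta$ behaves linearly under each $V_i$ so that the normalization producing distinct $\beta_1,\beta_2$ is achievable. The non-mixture hypothesis on $\Pi,\Omega,\Lambda$ and the nondegeneracy $\beta_1 \neq \beta_2$ are precisely what guarantee the surviving individual values are nonzero, so the delicate point is confirming that Unrestricted Domain genuinely realizes a configuration \emph{non-symmetric} about $\Lambda$ rather than one in which both objectives share a common indifference mixture.
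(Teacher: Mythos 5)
Your proposal is correct and follows essentially the same argument as the paper's proof: the same conflicting preference construction via Unrestricted Domain, the same normalization yielding distinct roots $\beta_1 \neq \beta_2$, and the same two-order expansion (Harsanyi-then-Bellman versus Bellman-then-Harsanyi) of $V_\Agg(s, a\Pi_\beta) - V_\Agg(s, a\Lambda)$ to force $\gamma_\Agg(s,a) = \gamma_1(s,a) = \gamma_2(s,a)$. The delicate points you flag (context-free weights, nonzero surviving values, non-symmetry about $\Lambda$) are exactly the ones the paper's proof relies on and resolves the same way.
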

\vspace{-\baselineskip}

\begin{proof} 
    Fix $s,a$. Using Theorem 2, choose $\{r_{i}, \gamma_{i}\}, r_{\Agg}, \gamma_{\Agg}$ to represent individual and aggregate preferences over $\L(\P)$. Define mixture policy $\Pi_\beta := \beta\Pi + (1-\beta)\Omega$.  W.l.o.g. assume $|\I|=2$. 
    
    We use Axiom 4 (Unrestricted Domain) to set
    \begin{equation}\label{eq:choice_of_pref}
    \begin{split}
    (T(s, a), \Pi) \succ_1 (T(s, a), \Lambda) \succ_1 (T(s, a), \Omega), \\ (T(s, a), \Omega) \succ_2 (T(s, a), \Lambda) \succ_2 (T(s, a), \Pi),
    \end{split}
    \end{equation}
    and, using Theorem 1 to shift $V_1, V_2$, we have
    \begin{equation}\label{eq:choice_of_values}
    \begin{split}
        &V_{1}(T(s, a), \Lambda) = V_{2}(T(s, a), \Lambda) = 0,\\[2pt]
        &V_{1}(T(s, a), \Pi) > 0 > V_{1}(T(s, a), \Omega) \ \ \ \textrm{s.t.} \ \ \  V_{1}(T(s, a), \Pi_{\beta_1}) = 0,\\[2pt]
        &V_{2}(T(s, a), \Omega) > 0 > V_{2}(T(s, a), \Pi) \ \ \ \textrm{s.t.} \ \ \  V_{2}(T(s, a), \Pi_{\beta_2}) = 0,
    \end{split}
    \end{equation}
    for some $\beta_1, \beta_2 \in (0, 1)$ with $\beta_1 \neq \beta_2$ (again appealing to Unrestricted Domain). 
    Intuitively, equation \ref{eq:choice_of_pref} requires individual preferences to conflict, and the $\beta_1 \neq \beta_2$ condition requires them to be non-symmetric about $\Lambda$. Equation \ref{eq:choice_of_values} is merely a convenient choice of numerical representation.

    We now apply Theorem 3 followed by Theorem 2 to the expression $V_\Agg(s, a\Pi_\beta) - V_\Agg(s, a\Lambda)$. We again invoke Theorem 1 to shift $V_\Agg$ and eliminate the constant term, so that by Theorem 3 $\exists\,\{w_i\}$ for which, 
    \begin{equation}\label{eq:1}
    \begin{split}
        V_\Agg(s, a\Pi_\beta) - V_\Agg(s, a\Lambda) &= \sum_{i \in \I}w_{i}V_i(s, a\Pi_\beta) - \sum_{i \in \I}w_{i}V_i(s, a\Lambda)\\
        &=  w_{1}\gamma_{1}(s, a)\left[V_{1}(T(s, a), \Pi_\beta) - V_{1}(T(s, a), \Lambda)\right] +\\
        &\quad\quad w_{2}\gamma_{2}(s, a)\left[V_{2}(T(s, a), \Pi_\beta) - V_{2}(T(s, a), \Lambda)\right]\\[2pt]
        &= w_{1}\gamma_{1}(s, a)V_{1}(T(s, a), \Pi_\beta) + w_{2}\gamma_{2}(s, a)V_{2}(T(s, a), \Pi_\beta).
    \end{split}
    \end{equation}
    where the second line applies Theorem 2, with rewards cancelling out. 
    
    Alternatively, applying Theorem 2 followed by Theorem 3 to the same expression yields, 
    \begin{equation}\label{eq:2}
    \begin{split}
    V_\Agg(s, a\Pi_\beta) - V_\Agg(s, a\Lambda) 
    &= \gamma_\Agg(s, a)V_\Agg(T(s, a), \Pi_\beta) - \gamma_\Agg(s, a)V_\Agg(T(s, a), \Lambda)\\
    &= w_{1}\gamma_\Agg(s, a)\left[V_{1}(T(s, a), \Pi_\beta) - V_{1}(T(s, a), \Lambda)\right] +\\
    &\quad\quad w_2\gamma_\Agg(s, a)\left[V_{2}(T(s, a), \Pi_\beta) - V_{2}(T(s, a), \Lambda)\right]\\[2pt]
    &= w_{1}\gamma_\Agg(s, a)V_{1}(T(s, a), \Pi_\beta) + w_2\gamma_\Agg(s, a)V_{2}(T(s, a), \Pi_\beta)
    \end{split}
    \end{equation}

    Combining equations \ref{eq:1} and \ref{eq:2}, we obtain:
    \begin{equation}\label{eq:final}
    \begin{split}
        &\hspace*{1.39em}w_{1}\gamma_{1}(s, a)V_{1}(T(s, a), \Pi_\beta) + w_{2}\gamma_{2}(s, a)V_{2}(T(s, a), \Pi_\beta) \\&= w_{1}\gamma_\Agg(s, a)V_{1}(T(s, a), \Pi_\beta) + w_2\gamma_\Agg(s, a)V_{2}(T(s, a), \Pi_\beta).
    \end{split}
    \end{equation}

    Finally, setting $\beta$ to be $\beta_1$ or $\beta_2$ in equation \ref{eq:final}, we obtain the equalities:
    \begin{equation}\label{eq_finalimpossibility}
    \begin{split}
        w_{1}\gamma_{1}(s,a)V_{1}(T(s, a), \Pi_{\beta_2}) = w_1\gamma_\Agg(s,a)V_{1}(T(s, a), \Pi_{\beta_2}),\\[2pt]         w_{2}\gamma_2(s,a)V_{2}(T(s, a), \Pi_{\beta_1}) = w_2\gamma_\Agg(s,a)V_{2}(T(s, a), \Pi_{\beta_1}).
    \end{split}
    \end{equation}
    The $w_i$ are non-zero (Axiom \ref{axiom_ud}, Sensitivity) and the remaining values are non-zero (else $\beta_1 = \beta_2$), so we conclude that $\gamma_\Agg(s, a) = \gamma_{1}(s, a) = \gamma_{2}(s, a)$. But this contradicts our assumption that individual preferences $\{\succ_i; i \in \I\}$ may be chosen arbitrarily, completing the proof.
\end{proof}

\vfill

\begin{theorem}[Possibility]\label{theorem_possibilityx} Consider the aggregation of arbitrary individual preference relations $\{\succ_i; i \in \I\}$ defined on $\L(\P)$, and consequently $\L(\P_h),\ \forall h$, that individually satisfy Axioms 1-2. There exists aggregated preference relations $\{\succ_\Agg^h\}$, defined on $\L(\P_{h}),\ \forall h$, that satisfy Axioms 1-4.\\[6pt]
In particular, \textbf{given} $s, a$, $V_\Agg$, $\{V_i\}$, $\{w_i(s)\}$, \textbf{where (A)} each $V_i$ satisfies Axioms 1-2 on $\L(\P)$, and $V_\Agg$ satisfies Axioms 1-2 on $\L(\P_h), \forall h$, \textbf{and (B)} $V_\Agg(s, a\Pi) = \sum_iw_i(s)V_i(s, a\Pi))$, \textbf{then}, choosing
\begin{equation}\label{eq_weight_update_x}\small
    w_i(sas') := w_i(sa) := w_i(s)\gamma_i(s,a) \quad\textrm{ for all } i, s'
\end{equation} 
implies that $V_\Agg(sas',\Pi) \propto \sum_iw_i(sa)V_i(s',\Pi)$ so that the aggregated preferences $\{\succ_\Agg^{sas'}\}$ satisfy Axiom 3 on $\L(\P_{sas'})$. Unrolling this result---$w_i(hsas') := w_i(hs)\gamma_i(s,a)$---produces a set of constructive, history dependent weights $\{w_i(h)\}$ such that Axiom 3 is satisfied for all histories $\{h\}$.
\end{theorem}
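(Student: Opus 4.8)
The plan is to mirror the algebra of the impossibility proof (Theorem~\ref{theorem_impossibility}) but with one crucial relaxation: I allow the aggregation weights applied before the transition, $w_i(s)$, to differ from those applied to the continuation, $w_i(sa)$. Concretely, I would start from hypothesis (B), $V_\Agg(s,a\Pi)=\sum_i w_i(s)V_i(s,a\Pi)$, and expand both sides with the generalized Bellman representation (Theorem~\ref{theorem_bellman}). Expanding each individual term $V_i(s,a\Pi)=R_i(s,a)+\gamma_i(s,a)V_i(T(s,a),\Pi)$ on the right, and expanding $V_\Agg(s,a\Pi)=R_\Agg(s,a)+\gamma_\Agg(s,a)V_\Agg(T(s,a),\Pi)$ on the left (reading $V_\Agg(T(s,a),\Pi)$ as the aggregate continuation over the transition distribution), the reward terms are independent of $\Pi$ and collect into a single constant. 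Solving for the continuation then yields $V_\Agg(T(s,a),\Pi)=\text{const}+\gamma_\Agg(s,a)^{-1}\sum_i w_i(s)\gamma_i(s,a)\,V_i(T(s,a),\Pi)$, which identifies $w_i(sa):=w_i(s)\gamma_i(s,a)$ and recovers the weight-update relation $w_i(s)\gamma_i(s,a)=w_i(sa)\gamma_\Agg(s,a)$ of the proof sketch. This relaxation of the before/after weights is precisely what dissolves the contradiction of Theorem~\ref{theorem_impossibility}, where forcing $w_i(s)=w_i(sa)$ had collapsed $\gamma_i$ onto $\gamma_\Agg$.

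The main technical obstacle is passing from this statement about the transition distribution $T(s,a)$ to the per-history claim at each realized next state, namely $V_\Agg(sas',\Pi)\propto\sum_i w_i(sa)V_i(s',\Pi)$ for each individual $s'$. I would resolve this by exploiting the history-dependence of policies: the identity above holds as an equality of expectations over $s'\sim T(s,a)$ for every $\Pi$, and the continuation of $\Pi$ after each distinct history $sas'$ can be chosen independently. Fixing a reference continuation at all reachable next states but one, and varying the continuation at a single $s'$ with $T(s,a)(s')>0$, forces the per-$s'$ residual $\gamma_\Agg(s,a)V_\Agg(sas',\Pi)-\sum_i w_i(sa)V_i(s',\Pi)$ to be constant in $\Pi$. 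Since individual preferences are Markov, $V_i(s',\cdot)$ is a legitimate representation of $\succ_i$ on $\L(\P_{sas'})$ (via the natural extension noted before the theorem), so this establishes the desired proportionality, as a positive affine relation, at each $sas'$.

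With the weighted-sum form in hand, verifying Axioms 1--4 on $\L(\P_{sas'})$ is routine. Axioms 1--2 are inherited directly from hypothesis (A), which grants them to $V_\Agg$ on every $\L(\P_h)$. Axiom 3 (Pareto indifference) is the easy converse direction of Harsanyi's theorem (Theorem~\ref{theorem_harsanyi}): if every individual is indifferent between two lotteries, each $V_i$ assigns them equal value, so the weighted sum---hence $V_\Agg$---does too. Axiom 4 holds because the construction used no special property of the particular individual preferences (Unrestricted Domain), and because $w_i(sa)=w_i(s)\gamma_i(s,a)\neq 0$, as $w_i(s)\neq 0$ by hypothesis and $\gamma_i(s,a)>0$ by definition of the discount function (Sensitivity). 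I would close by induction on history length: the single-step argument above, applied at an arbitrary history $hs$ in place of $s$, turns any Harsanyi representation with weights $w_i(hs)$ into one with weights $w_i(hsas'):=w_i(hs)\gamma_i(s,a)$ at $hsas'$; iterating from the root produces the constructive, history-dependent weights $\{w_i(h)\}$ for which Axiom 3 holds at every $h$, yielding the claimed family $\{\succ_\Agg^h\}$ satisfying Axioms 1--4.
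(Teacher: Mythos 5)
Your proof is correct, but it reaches the key relation by a genuinely more direct route than the paper. The paper's proof re-runs the entire machinery of the impossibility argument: it introduces three policies $\Pi,\Omega,\Lambda$, constructs mixtures $\Pi_{\beta_1},\Pi_{\beta_2}$ that each individual is indifferent to relative to $\Lambda$, applies Theorems \ref{theorem_harsanyi} and \ref{theorem_bellman} in both orders to the difference $V_\Agg(s,a\Pi_\beta)-V_\Agg(s,a\Lambda)$, and then specializes $\beta$ to isolate each individual's term, arriving at $w_i(s)\gamma_i(s,a)=w_i(sa)\gamma_\Agg(s,a)$ as a \emph{necessary} condition on any Harsanyi weights at the next step. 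You instead expand hypothesis (B) once with the Bellman representation on both sides, observe that the reward terms are policy-independent constants, and read off the same relation directly --- which is shorter, avoids the $\Lambda$-differencing trick entirely (the trick earns its keep only in the impossibility proof, where one must zero out individual terms to force a contradiction), and is arguably the logically cleaner direction for an existence result: you exhibit the affine combination and then verify Axiom 3, rather than deriving what the weights would have to be if Axiom 3 held. You also make explicit a step the paper glosses over: the passage from the identity in expectation over $s'\sim T(s,a)$ to the per-history claim at each realized $sas'$, which you handle correctly by varying the continuation policy at a single next state while freezing the others. Two small points to tighten: the leftover constant in your affine relation should be explicitly absorbed by a Theorem \ref{theorem_VNM} shift of $V_\Agg$ (as the paper does) so that the stated proportionality holds; and for Sensitivity you assume $w_i(s)\neq 0$, which is not literally in hypothesis (B) but is forced if the aggregation at $s$ itself satisfies Axiom \ref{axiom_ud}, so it is worth stating that assumption.
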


\begin{proof}
    We follow the proof of Theorem \ref{theorem_impossibility}. 
    Fix $s,a$. Using Theorem 2, choose $\{r_{i}, \gamma_{i}\}, r_{\Agg}, \gamma_{\Agg}$ to represent individual and aggregate preferences over $\L(\P_{s})$ and $\L(\P_{sas'})$. 
    Define mixture policy $\Pi_\beta := \beta\Pi + (1-\beta)\Omega$.  
    W.l.o.g. assume $|\I|=2$.  
    
    We use Axiom 4 (Unrestricted Domain) to set
    \begin{equation}\label{eq:choice_of_pref_x}
    \begin{split}\small
    (T(s, a), \Pi) \succ_1 (T(s, a), \Lambda) \succ_1 (T(s, a), \Omega), \\ (T(s, a), \Omega) \succ_2 (T(s, a), \Lambda) \succ_2 (T(s, a), \Pi),
    \end{split}
    \end{equation}
    and, using Theorem 1 to shift $V_1, V_2$, we have
    \begin{equation}\label{eq:choice_of_values_x}
    \begin{split}\small
        &V_{1}(T(s, a), \Lambda) = V_{2}(T(s, a), \Lambda) = 0,\\[2pt]
        &V_{1}(T(s, a), \Pi) > 0 > V_{1}(T(s, a), \Omega) \ \ \ \textrm{s.t.} \ \ \  V_{1}(T(s, a), \Pi_{\beta_1}) = 0,\\[2pt]
        &V_{2}(T(s, a), \Omega) > 0 > V_{2}(T(s, a), \Pi) \ \ \ \textrm{s.t.} \ \ \  V_{2}(T(s, a), \Pi_{\beta_2}) = 0,
    \end{split}
    \end{equation}
    for some $\beta_1, \beta_2 \in (0, 1)$ with $\beta_1 \neq \beta_2$ (again appealing to Unrestricted Domain). 

    We now apply Theorem 3 followed by Theorem 2 to the expression $V_\Agg(s, a\Pi_\beta) - V_\Agg(s, a\Lambda)$. We again invoke Theorem 1 to shift $V_\Agg$ and eliminate the constant term, so that by Theorem 3 $\exists\,\{w_i(s)\}$ for which, 
    \begin{equation}\label{eq:11}\small
    \begin{split}
        V_\Agg(s, a\Pi_\beta) - V_\Agg(s, a\Lambda) &= \sum_{i \in \I}w_{i}(s)V_i(s, a\Pi_\beta) - \sum_{i \in \I}w_{i(s)}V_i(s, a\Lambda)\\
        &=  w_{1}(s)\gamma_{1}(s, a)\left[V_{1}(T(s, a), \Pi_\beta) - V_{1}(T(s, a), \Lambda)\right] +\\
        &\quad\quad w_{2}(s)\gamma_{2}(s, a)\left[V_{2}(T(s, a), \Pi_\beta) - V_{2}(T(s, a), \Lambda)\right]\\[2pt]
        &= w_{1}(s)\gamma_{1}(s, a)V_{1}(T(s, a), \Pi_\beta) + w_{2}(s)\gamma_{2}(s, a)V_{2}(T(s, a), \Pi_\beta).
    \end{split}
    \end{equation}
    where the second line applies Theorem 2, with rewards cancelling out. 
    
    It suffices to find one set of satisfactory $w_i(h)$, so we can assume that, given $s, a$, $w_i(sas') := w(sa)$ is the same for all $s'$. This will allow us to factor it out below. Then, applying Theorem 2 followed by Theorem 3 to the same expression yields, 
    \begin{equation}\label{eq:22}\small
    \begin{split}
    V_\Agg(s, a\Pi_\beta) - V_\Agg(s, a\Lambda) 
    &= \gamma_\Agg(s, a)V_\Agg(T(s, a), \Pi_\beta) - \gamma_\Agg(s, a)V_\Agg(T(s, a), \Lambda)\\
    &= w_{1}(sa)\gamma_\Agg(s, a)\left[V_{1}(T(s, a), \Pi_\beta) - V_{1}(T(s, a), \Lambda)\right] +\\
    &\quad\quad w_2(sa)\gamma_\Agg(s, a)\left[V_{2}(T(s, a), \Pi_\beta) - V_{2}(T(s, a), \Lambda)\right]\\[2pt]
    &= w_{1}(sa)\gamma_\Agg(s, a)V_{1}(T(s, a), \Pi_\beta) + w_2(sa)\gamma_\Agg(s, a)V_{2}(T(s, a), \Pi_\beta)
    \end{split}
    \end{equation}

    Combining equations \ref{eq:11} and \ref{eq:22}, setting $\beta$ to be $\beta_1$ or $\beta_2$ in equation, and rearranging, we obtain the equalities:
    \begin{equation}\label{eq_finalpossibilityxx}\small
        w_{1}(s)\gamma_{1}(s,a) = w_1(sa)\gamma_\Agg(s,a)\quad\textrm{and}\quad           w_{2}(s)\gamma_2(s,a) = w_2(sa)\gamma_\Agg(s,a),
    \end{equation}
    from which we conclude that:
        \begin{equation}
            \frac{w_2(sa)}{w_1(sa)} = \frac{w_2(s)\gamma_2(s,a)}{w_1(s)\gamma_1(s,a)}
        \end{equation}
    This shows the existence of weights $w_i(sa)$, unique up to a constant scaling factor, for which $V_\Agg(T(s,a),\Pi) \propto \sum_iw_i(sa)V_i(T(s,a),\Pi)$, that apply regardless of how individual preferences are chosen or aggregated at $s$. Unrolling the result completes the proof.    
\end{proof}

\end{document}